\newtheorem{theorem}{Theorem}
\newtheorem{lemma}{Lemma}
\newtheorem{definition}{Definition}
\newtheorem{proposition}{Proposition}
\theoremstyle{definition}
\newcommand{\bfzero}{\mathbf{0}}
\newcommand{\bfc}{\bm{c}}
\newcommand{\bfy}{\bm{y}}
\newcommand{\bfX}{\mathbf{X}}
\newcommand{\bbR}{\mathbb{R}}
\newcommand{\calA}{{\cal A}}
\newcommand{\calC}{{\cal C}}
\newcommand{\calD}{{\cal D}}
\newcommand{\calS}{{\cal S}}
\newcommand{\calV}{{\cal V}}
\newcommand{\p}{\partial}
\def\ie{\emph{i.e.},~}
\def\eg{\emph{e.g.},~}
\def\d{\mathrm{d}}
\renewcommand\d[1]{{\rm d}{#1}}
\renewcommand{\vec}[1]{\mathbf{#1}}
\let\oldnl\nl
\newcommand{\nonl}{\renewcommand{\nl}{\let\nl\oldnl}}
\begin{document}
\title{Robotic manipulation of a rotating chain}

\author{Hung Pham \qquad Quang-Cuong Pham\\ \thanks{Hung Pham and
    Quang-Cuong Pham are with Air Traffic Management Research
    Institute (ATMRI) and Singapore Centre for 3D Printing (SC3DP),
    School of Mechanical and Aerospace Engineering, Nanyang
    Technological University, Singapore. This work was partially
    supported by grant ATMRI:2014-R6-PHAM (awarded by NTU and the
    Civil Aviation Authority of Singapore) and by the Medium-Sized
    Centre funding scheme (awarded by the National Research
    Foundation, Prime Minister's Office, Singapore).}}

\date{\today} 
\maketitle
\thispagestyle{empty}
\pagestyle{empty} 
\begin{abstract}

  This paper considers the problem of manipulating a uniformly
  rotating chain: the chain is rotated at a constant angular speed
  around a fixed axis using a robotic manipulator. Manipulation is
  quasi-static in the sense that transitions are slow enough for the
  chain to be always in ``rotational equilibrium''.  The curve traced
  by the chain in a rotating plane -- its shape function -- can be
  determined by a simple force analysis, yet it possesses a complex
  multi-solutions behavior typical of non-linear systems. We prove
  that the configuration space of the uniformly rotating chain is
  homeomorphic to a two-dimensional surface embedded in
  $\mathbb{R}^3$. Using that representation, we devise a manipulation
  strategy for transiting between different rotation modes in a stable
  and controlled manner. We demonstrate the strategy on a physical
  robotic arm manipulating a rotating chain. Finally, we discuss how
  the ideas developed here might find fruitful applications in the
  study of other flexible objects, such as circularly towed aerial
  systems, elastic rods or concentric tubes.

\end{abstract}
\section{Introduction}
\label{sec:intro}
An idle person with a chain in her hand will likely at some point
starts rotating it around a vertical axis, as in
Fig.~\ref{fig:rotate_by_hand}A. After a while, she might be able to
produce another mode of rotation, whereby the chain would curve
inwards, as in Fig.~\ref{fig:rotate_by_hand}B, instead of springing
completely outwards. With sufficient dexterity, she might even reach
more complex rotation modes, such as in
Fig.~\ref{fig:rotate_by_hand}C. Transitions into such complex rotation
modes are however difficult to reproduce reliably as instabilities can
quickly lead to unsustainable rotations
(Fig.~\ref{fig:rotate_by_hand}D). This paper investigates the
mechanics of the transitions between different rotation modes, and
proposes a strategy to perform those transitions in a stable and
controlled manner.

\subsection*{Motivations}

There are several reasons why this problem is hard to solve. First,
there are multiple solutions for a given control input (distance~$r$
between the attached end of the chain and the rotation axis, and
angular speed $\omega$). This ambiguity makes it difficult to devise a
manipulation strategy directly in the control space. Second, some
control inputs can quickly lead to ``uncontrollable'' behaviors of the
chain, as illustrated in in Fig.~\ref{fig:rotate_by_hand}D.

\begin{figure}[tp]
  \centering
  \includegraphics[width=0.5\textwidth]{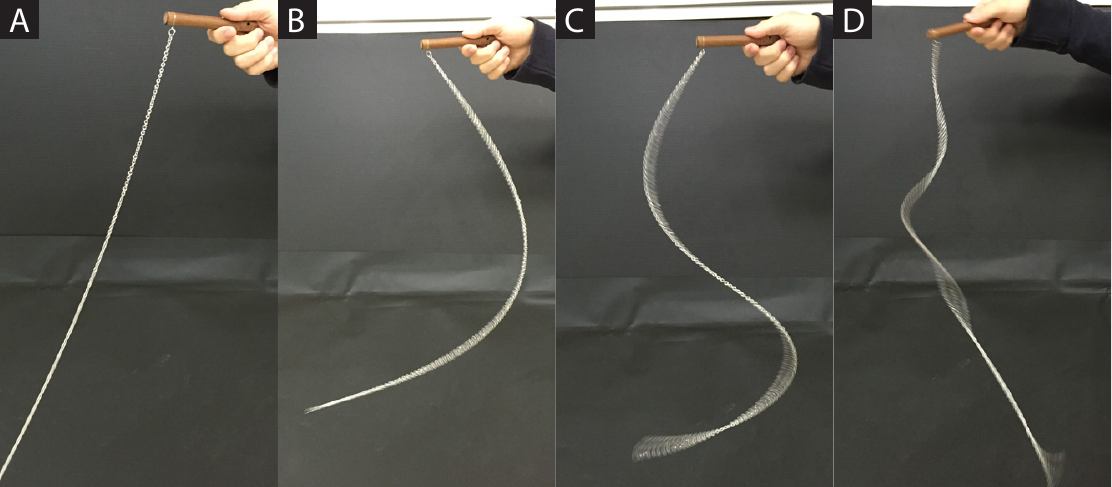}
  \caption{Manual rotation of a chain around a vertical
    axis. \textbf{A, B, C}: Uniform rotation modes 0, 1, 2
    respectively.  \textbf{D}: Unstable behavior.}
  \label{fig:rotate_by_hand}
\end{figure}  

The theoretical study of the rotating chain and, in particular, of its
rotation modes, has a long and rich history in the field of applied
mathematics~\cite{Kolodner1955, caughey1958whirling, Caughey1970,
  Wu1972, stuart1976steadily, russell1977equilibrium, Toland1979},
which we review in Section~\ref{sec:applied-mathematics}. Here, by
devising and implementing a \emph{manipulation strategy} to stably
transit between different rotation modes, we hope to provide a new,
robotics-enabled, understanding of this problem. Indeed, at the core
of our approach lie concepts specifically forged in the field of
robotics, such as ``configuration space'', ``stable configurations'',
``path-connectivity'', etc.

As opposed to rigid bodies, flexible objects are in general
characterized by an infinite number of degrees of freedom, which
entails significant challenges when it comes to manipulation. Specific
approaches have therefore been developed in the field of robotics to
study the manipulation of flexible objects, as reviewed in
Section~\ref{sec:robotic-manip}.

The above studies are motivated by a number of practical
applications. For the rotating chain in particular, applications
include aerial manipulation by Unmanned Air Vehicles (UAV), which has
recently received some attention, as discussed in more details in
Section~\ref{sec:aerial-manip}.



\subsection*{Contribution and organization of the paper}

Our contribution in this paper is threefold. First, we study the case
of arbitrary non-zero attachment radii~$r$
(Section~\ref{sec:forward_kinematics}). This extends and generalizes
existing works, which all focus on the case of zero attachment radius,
and sets the stage for stable transitions between different rotation
modes, which specifically require manipulating the attachment
radius. In particular, we determine in this section the number of
solutions to the shape equation for any given value $r$ and $\omega$.

Second, we show that the configuration space of the uniformly rotating
chain with variable attachment radius is homeomorphic to a
two-dimensional surface embedded in $\mathbb{R}^3$
(Section~\ref{sec:configuration_space}). We study the subspace of
stable configurations and establish that it is not possible to stably
transit between rotation modes without going back to the low-amplitude
regime.

Third, based on the above results, we propose a manipulation
strategy for transiting between rotation modes in a stable and
controlled manner (Section~\ref{sec:manipulation}). We show the
strategy in action in a physical experiment where a robotic arm
manipulates a rotating chain and makes it reliably transit between
different rotation modes.

Before presenting our contribution, we review related works
(Section~\ref{sec:related-works}) and recall Kolodner's equations of
motion of the rotating chain (Section~\ref{sec:background}). Finally,
we discuss possible applications and extensions and sketch some
perspectives for future work (Section~\ref{sec:conclusion}).

\section{Related works}
\label{sec:related-works}

The manipulation of the rotating chain is relevant to a number of
fields such as (i) applied mathematics, (ii) flexible object
manipulation in robotics, and (iii) aerial manipulation. We now review
the literature and describe the position of the current work with
respect to each of these fields.

\subsection{Theoretical studies of the rotating chain}
\label{sec:applied-mathematics}

In applied mathematics, the study of the rotating chain was initiated
in 1955 by a remarkable paper by
Kolodner~\cite{Kolodner1955}. Kolodner established the existence of
critical speeds $(\omega_i)_{i\in\mathbb{N}}$ such that there are no
uniform rotations if the angular speed $\omega<\omega_1$, and there
are exactly $n$ rotation modes for $\omega_n<\omega<\omega_{n+1}$.
In~\cite{caughey1958whirling}, Caughey studied the rotating chain with
small but non-zero attachment radii. The results obtained by Caughey
extend Kolodner's and agree with our study of the low-amplitude
regime. In~\cite{Caughey1970}, Caughey investigated the rotating chain
with both ends attached.  In \cite{stuart1976steadily}, Stuart
considered the original rotating chain problem using bifurcation
theory, and arrived at the same results as Kolodner. In~\cite{Wu1972},
Wu considered the large angular speeds regime. In~\cite{Toland1979},
Toland initiated a new approach based on the calculus of variation,
but did not obtain new significant results, as compared to Kolodner.

The common point of all previous works is that the chain is attached
to the rotation axis, or very close to
it~\cite{caughey1958whirling}. Yet, reliably observing and transiting
between different rotation modes precisely require using arbitrary
non-zero attachment radii $r$, the distance between the attached end
and the rotation axis. The current paper extends previous studies by
specifically considering arbitrary attachment radii.

\subsection{Robotic manipulation of flexible objects}
\label{sec:robotic-manip}

Within the field of robotics, the manipulation of flexible objects is
studied along two main directions. A first direction is topological:
one is mainly interested in the order and sequence of the manipulation
rather than in the precise behavior of the flexible object. Examples
include origami folding~\cite{balkcom2008robotic}, laundry
folding~\cite{miller2012geometric} or rope-knotting
\cite{wakamatsu2006knotting, Yamakawa2013a}.

The second research direction is concerned with the precise shape and
dynamics of the manipulated object. Within this research direction,
one can distinguish two main approaches. The first approach
discretizes the flexible object into a large number of small rigid
elements, and subsequently carries out finite-element calculations,
see \eg~\cite{Skop1971, russell1977equilibrium, murray1996trajectory}
for inextensible cables or~\cite{Gilbert2013a, Rucker2010a} for
concentric tube robots. This approach can be applied to any type of
flexible objects as long as a dynamical model is available. However,
it usually yields no \emph{qualitative} understanding of the
manipulation. For example, while finite-element calculations can
compute the shape of the rotating chain for various control inputs,
they can establish neither the existence of different rotation modes,
nor the manipulation strategies to transit between different modes.

By contrast, the second approach considers the flexible object as the
solution of a (partial) differential equation and tries to establish
qualitative properties of this solution.  While this approach is
harder to put in place -- usually because of the complex mathematical
calculations and concepts involved -- it can lead to stunning and
insightful results. For example, Bretl and colleagues established that
the configuration space of the Kirchhoff elastic rod is of
dimension~6~\cite{bretl2014quasi} and that it is
path-connected~\cite{borum2015free}. Such results would be impossible
to obtain via finite-element methods.

The present study of the rotating chain is inscribed within this
analytical approach. From the dynamic model of the rotating chain, we
investigate qualitative properties of its configuration space:
dimension, connectivity, and stability. These properties are in turn
crucial to devise a manipulation strategy to stably transit between
different rotation modes.

\subsection{Aerial manipulation}
\label{sec:aerial-manip}

Although the study of the rotating chain first stemmed out of
scientific curiosity, it has recently found applications in aerial
manipulation. In~\cite{murray1996trajectory, Williams2007}, the
authors considered a fixed-wing aircraft towing a long cable whose other
end is free. The circular flying pattern imprints a pseudo-stationary
shape to the cable, which in turn allows precisely controlling the
position of the free end. Practical applications of this scheme
include remote sensing in isolated areas~\cite{Williams2007}, payload
delivery and pickup~\cite{Skop1971, merz2016feasibility,
  Williams2007}, or more recently, recovery of micro air
vehicles~\cite{Colton2011, nichols2014aerial, Sun2014}. In the latter
application, the micro vehicles are able to attach themselves to the
towed end, which moves at a relatively slower speed than that of the
aircraft. The recent surge of interest in Unmanned Air Vehicles (UAVs)
also offers many potential applications: \cite{merz2016feasibility}
studies a single UAV flying circularly while towing a cable,
\cite{sreenath2013trajectory} deals with general (non-circular) aerial
manipulation, while~\cite{michael2011cooperative} targets cooperative
manipulation using a team of UAVs.



The above works are based on dynamic simulation~\cite{Skop1971,
  russell1977equilibrium, Williams2008}, or numerical optimal
control~\cite{Sun2014, Williams2007a}. Physical experiments were found
to agree with simulations~\cite{Williams2007}. However, there are a
number of questions these works are unable to address, for instance:
(i) under which conditions are there multiple solutions to the same
set of controls (fly radius and angular speed)?  (ii) how to avoid or
initiate ``jumps'' between different quasi-static rotational
solutions~\cite{russell1977equilibrium}? Here, we precisely answer
these questions for the case of a simple rotating chain, without
considering aerodynamic drag or end mass. We also discuss how the
method can be extended to include these effects, offering thereby
solid theoretical foundations for developing safe and stable
applications in circular aerial manipulation.


\section{Background and problem setting}
\label{sec:background}

\subsection{Equations of motion of the rotating chain}

Here we recall the main equations governing the motion of the rotating
chain initially obtained by Kolodner~\cite{Kolodner1955}.
Fig.~\ref{fig:schema} depicts an inextensible and homogeneous chain of
length $L$ and linear density $\mu$ that rotates around a vertical
$Z$-axis. One end of the chain is maintained at the attachment radius
$r$ from the rotation axis, while the other end is free. Note that the
case of a chain with tip mass can be reduced to this case, see
Appendix~\ref{sec:tip-mass}.


\begin{figure}[htp]
  \centering
  \includegraphics[width=0.3\textwidth]{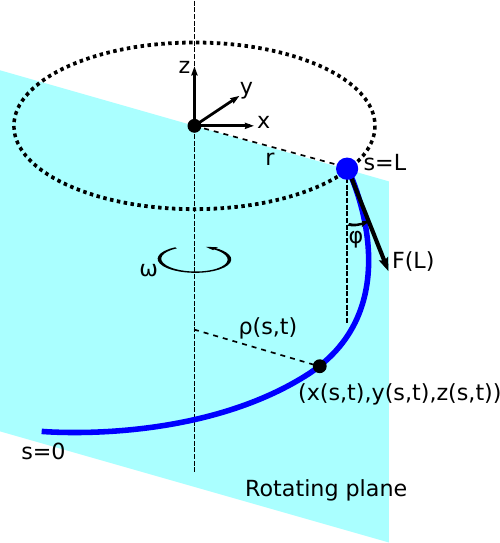}
  \caption{A chain rotating around a fixed vertical axis. At a time
    instant $t$, the chain describes a 3D curve parameterized by $s$:
    $s=0$ at the free end, $s=L$ at the attached end, where $L$ is the
    length of the chain.}
\label{fig:schema}
\end{figure}

Let $\vec x(s,t) := {[x(s,t),y(s,t),z(s,t)]}^\top\in \bbR^3$ denote a
length-time parameterization of the chain where $s$ equals zero at the
free end and equals $L$ at the attached end
(Fig.~\ref{fig:schema}). Next, let $F(s,t)\geq 0$ be the tension of
the chain. Neglecting aerodynamic effect, one writes the equation of
motion for the chain as
\begin{equation}
  \label{eq:eom}
  \mu \ddot{\vec x} = (F{\vec x}')' + \mu \vec{g},
\end{equation}
where $\dot \Box$ and $\Box'$ denote differentiation with respect to
$t$ and $s$ respectively; $\vec g:=[0, 0, -g]^{\top}$ is the gravitational
acceleration vector. The inextensibility constraint can be written as
\begin{equation}
  \label{eq:inex}
  \|\vec x(s, t)'\|_{2} = 1.
\end{equation}

We seek solutions that are \emph{uniform rotations}; those which have
constant shape in a plane that rotates around the $Z$-axis. In this
case, the motion of the chain becomes
\begin{equation}
  \begin{aligned}
    x(s,t) & = \rho(s) \cos(\omega t),\\
    y(s,t) & = \rho(s) \sin(\omega t),\\
    z(s,t) & = z(s),
  \end{aligned}
\end{equation}
where the function $\rho(s)$ is called the \emph{shape function} of
the chain. Directly from inextensibility constraint~\eqref{eq:inex},
we have:
\begin{equation}
  \label{eq:11}
  \|(\rho(s), z(s))\|_2 = \sqrt{\rho'(s)^2 + z'(s)^2} = 1.
\end{equation}
Also, the tension of the chain $F(s, t)$ is time independent.

Substituting the above expressions into Eq.~\eqref{eq:eom} yields
\begin{align}
  \label{eq:x1z}
  (F\rho')' + \mu\rho \omega^2 &= 0, \\
  \label{eq:2}
  (Fz')'  - \mu g &= 0,
\end{align}
where $F, \rho, z$ are functions of $s$.  Integrating Eq.~\eqref{eq:2}
and noting that the tension at the free end vanishes (\ie
$F(0)=0$) yield
\begin{equation}
  \label{eq:24}
  Fz' = \int_0^s\mu g \; \d \lambda= \mu g s.
\end{equation}
Next, by the inextensibility constraint~\eqref{eq:11}, we have
\begin{equation}
  \label{eq:F}
  F =\frac{\mu g s}{z'} =  \frac{\mu g s}{\sqrt{1-\rho'^2}}.
\end{equation}
Substituting Eq.~(\ref{eq:F}) into Eq.~(\ref{eq:x1z}) yields the
governing equation for the shape function $\rho(s)$
\begin{equation}
  \label{eq:x1}
  \frac{\d {}}{\d s} \left( \frac{\mu g s}{\sqrt{1-\rho'^2}}\rho' \right)
  + \mu\rho\omega^2
  = 0
\end{equation}
subject to the following boundary condition 
\begin{equation}
  \label{eq:1}
  \rho(L) = r.
\end{equation}

Remark that we have applied two boundary conditions: (i) tension at
the free end must be zero: $F(0, t)=0$ for any $t$; and (ii)
$\vec x(L, t)$ equals the reference trajectory traced by the
robotic manipulator (or the aircraft's trajectory in the towing
problem).

\subsection{Problem formulation}

We can now define the \emph{configurations} and the \emph{control
  inputs} of a rotating chain.

\begin{definition} \label{def:conf} (Configuration) A configuration of
  the rotating chain is a pair $q:=(\omega,\rho)$, where
  $\omega \geq 0$ is a rotation speed and $\rho$ is a shape function
  satisfying the governing equation~(\ref{eq:x1}) and that
  $\rho(0)\geq 0$. The set of all such configurations is called the
  configuration space of the rotating chain and denoted $\calC$.
\end{definition}

\begin{definition} (Control input) A control input is a pair
  $(r, \omega)$, where $r\geq 0$ is an attachment radius and
  $\omega\geq 0$ is a rotation speed. The set of all inputs is called
  the control space and denoted $\calV$. If equation~(\ref{eq:x1}) has
  non-trivial solutions with boundary conditions and parameters
  defined by the input ($r, \omega$) then the input is called
  \emph{admissible}.
\end{definition}


Note that the condition $\rho(0) \geq 0$ in Definition \ref{def:conf}
identifies duplicate solutions. Any configuration $(\omega,\rho)$
corresponds to two possible solutions: one has shape function $\rho$
and one has shape function $-\rho$, both rotate at angular speed
$\omega$. The later solution can be obtained by rotating the former
solution by $180$ degrees. A similar remark applied to the definition
of the control space $\calV$ where we require positive attachment
radius.


We can formulate the chain manipulation problem as follows: given a
pair of starting and goal configurations
$(q_\mathrm{init}, q_\mathrm{goal})$ find a control trajectory
$(0, 1) \rightarrow \calV$ that brings the chain from
$q_\mathrm{init}$ to $q_\mathrm{goal}$ without going through
instabilities (instabilities will be discussed in
Section~\ref{sec:stability_analysis}).


\section{Forward kinematics of the rotating chain with non-zero
  attachment radius}
\label{sec:forward_kinematics}

\subsection{Dimensionless shape equation} 
\label{sec:dimless}

Still following Kolodner, we convert Eq.~\eqref{eq:x1} into a
dimensionless equation, more appropriate for subsequent
analyses. Consider the changes of variable
\begin{equation}
  \label{eq:change_of_var}
  u := \frac{\rho'}{\sqrt{1-{\rho'}^2}}\frac{s\omega^2}{g}, \quad \bar
  s := \frac{s\omega^2}{g},
\end{equation}
which by combining with Eq.~\eqref{eq:x1} leads to
\begin{equation}
\label{eq:rho2}
\frac{\d {u}}{\d {\bar s}} + \rho \frac{\omega^2}{g} = 0.
\end{equation}


One can now differentiate Eq.~\eqref{eq:rho2} with respect to
$\bar s$ to arrive at
\[
    \frac{\d{}^2  }{\d{\bar s}^2 } u + \rho' = 0,
\]
which is combined with the relation
\begin{equation}
\label{eq:rho1}
  \rho' =\frac{u }{\sqrt{{\bar s}^2+u^2}}
\end{equation}
to yield the dimensionless differential equation
\begin{equation}
  \label{eq:new}
  \frac{\d{}^2  }{\d{\bar s}^2 } u(\bar s) +
  \frac{u(\bar s)} {\sqrt{\bar s^2+{u(\bar s)}^2}} = 0. 
\end{equation}

We first consider the boundary condition at $\bar s = 0$. By
definition of $u$, one has $u(0)=0$.  The end boundary condition
$\rho(L)=r$ implies that
\begin{equation}
\label{eq:boundary_2}
u'\left(L\frac{\omega^2}{g}\right)= -r\frac{\omega^2}{g},
\end{equation}
where $\Box'$ denotes in this context differentiation with respect to
$\bar s$. 

We summarize the boundary conditions on $u$ as
\begin{equation}
  \label{eq:boundary}
  u(0) = 0, \quad u'(\bar L) = \bar r,
\end{equation}
where
\begin{equation}
  \label{eq:defa}
   \bar L:=L\omega^2/g, \quad
  \bar r:=-r\omega^2/g. 
\end{equation}
This is the standard form of a Boundary Value Problem (BVP).

\noindent \textbf{Remark} Denote by $\rho_{0}$ the distance from the
free end to the $Z$-axis. Using Eq.~\eqref{eq:rho2}, we have
\begin{equation}
\label{eq:boundary_1}
u'(0)= a, 
\end{equation}
where $a = -\rho_0 \omega^2/ g$.

\noindent \textbf{Remark} Applying L'Hôpital rule twice, one finds
that
\[
\lim_{\bar s\to 0}\frac{u(\bar s)} {\sqrt{\bar s^2+{u(\bar s)}^2}} = \frac{a}{\sqrt{1+a^2}}.
\]
Thus, the differential equation~(\ref{eq:new}) is well-defined at
$\bar s=0$. 
\subsection{Shooting method}
\label{sec:shooting}

\begin{figure}[t]
  \centering
  \begin{tikzpicture}
    \node[anchor=south west,inner sep=0] (image) at (0,0) {
      \includegraphics[width=0.3\textwidth]{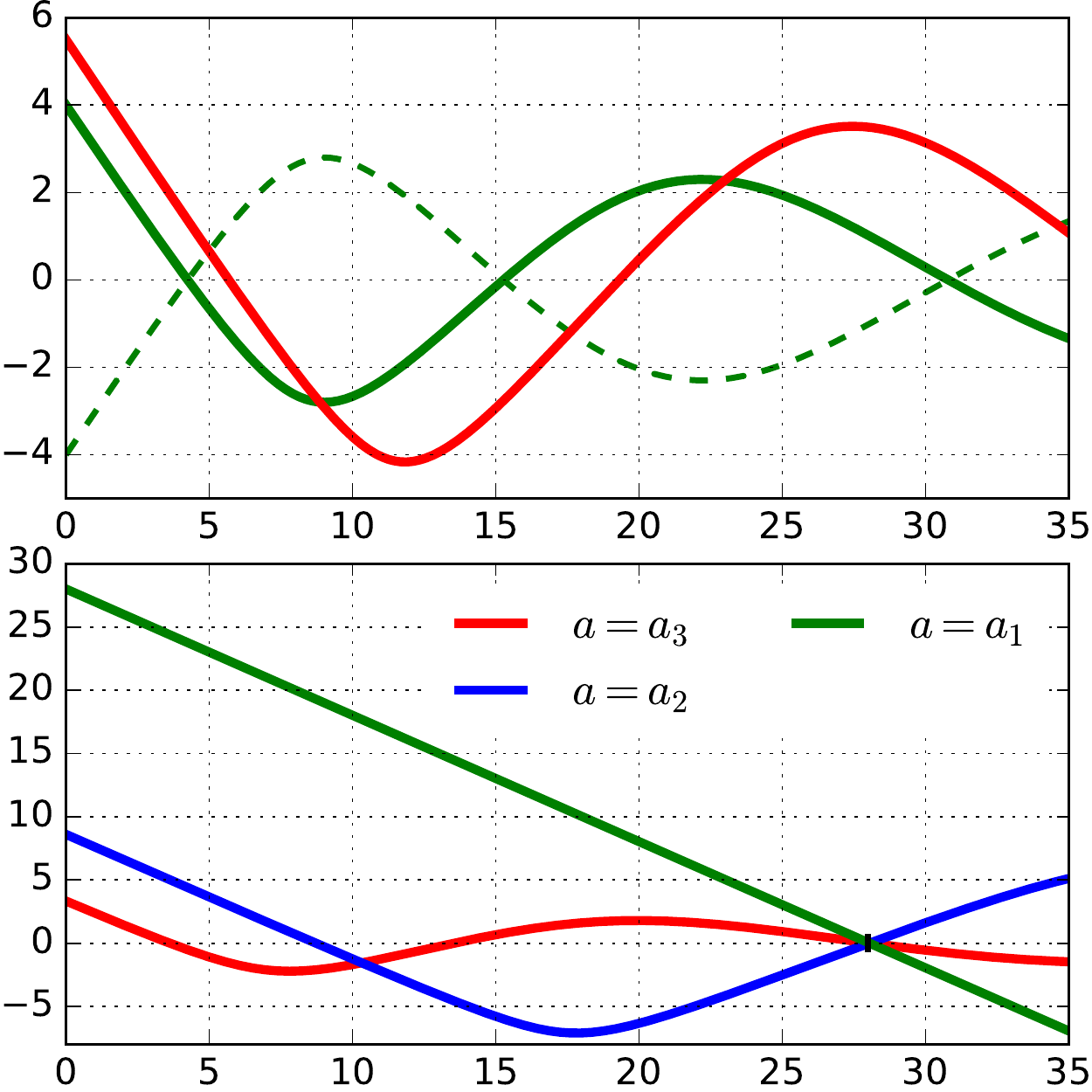}
    };  
    \begin{scope}[x={(image.south east)},y={(image.north west)}]
      \node at (0.53, -0.04) {$\bar s$};
      \node at (-0.04, 0.29) {$u'$};
      \node at (-0.04, 0.79) {$u'$};

      \filldraw [color=black!60, fill=red!5] (0.795, 0.135) circle
      (0.07cm) node [anchor=south, color=black] {$(\bar L, 0)$};
      \filldraw [color=black!60, fill=red!5] (0.67, 0.835) circle
      (0.07cm) node [anchor=south east, color=black] {$(\bar L, \bar r)$};
      
      \node at (1.02, 0.96) {\textbf{A}};
      \node at (1.02, 0.46) {\textbf{B}};
    \end{scope}

  \end{tikzpicture}
  \caption{\textbf{A}: Shooting from different initial guesses of
    $u'(0)=a$. There might be more than one initial value
    (green and red) that satisfy the end condition
    $u'(\bar L) = \bar r$.  \textbf{B}: $a_1, a_2, ...$ are
    different initial values of $u'(0)$ that yield
    $u'(\bar L)=0$; $a_i$ denotes the initial guess such that the
    $i$-th zero of $u'$ coincides with $\bar L$}
  \label{fig:2Dshooting}

\end{figure}

We numerically solve the BVP posed in the last section using the
\emph{simple shooting method}~\cite{Stoer1982}.  Given a control input
$(r, \omega)$, the method finds resulting configurations as follows:
\begin{enumerate}
\item[1.] compute $(\bar r, \bar L)$ from $(r, \omega)$ using
  Eq.~\eqref{eq:defa};
\item[2.] repeat until convergence:
  \begin{enumerate}
  \item[2a.] guess an initial value $a\in \bbR$ for $u'(0)$ or
    use the value from the last iteration;
  \item[2b.]  integrate Eq.~\eqref{eq:new} from the initial
    condition $(u(0),u'(0))=(0,a)$ at $\bar s = 0$ to
    $\bar s=\bar L$;
  \item[2c.] check whether $u'(\bar L)=\bar r$;
  \item[2d.] if not, refine the guess $a$ by \eg Newton's method;
  \end{enumerate}
\item[3.] recover $\rho(s)$ from ${u'}_{\rm{last\_iter}}(\bar s)$.
\end{enumerate}
One can then recover $z(s)$ using $\rho(s)$, the inextensibility
constraint~\eqref{eq:11}, the boundary condition $z(L)=0$ and the fact
that $z'(s)\geq 0$ (See Eq.~\eqref{eq:24}).  Also, for a given tuple
$(\bar r, \bar L)$, there might be multiple solutions to the BVP which
translates to multiple configurations for a given control input
(Fig.~\ref{fig:2Dshooting}\textbf{A}).

\noindent \textbf{Remark} It is straightforward to see that if
$u(\bar s)_{\bar s\in [0, \bar L]}$ is a solution of
Eq.~\eqref{eq:new}, then $- u(\bar s)_{\bar s\in [0, \bar L]}$ is also
a solution. Therefore there is no loss of generality to consider only
non-negative values of $a$, as integrating from $-a$ leads to the same
configuration.

\subsection{Number of configuration}
\label{sec:number-configuration}


\begin{figure}[tp]
  \centering
  \begin{tikzpicture}
    \node[anchor=south west,inner sep=0] (image) at (0,0) {
      \includegraphics[width=0.3\textwidth]{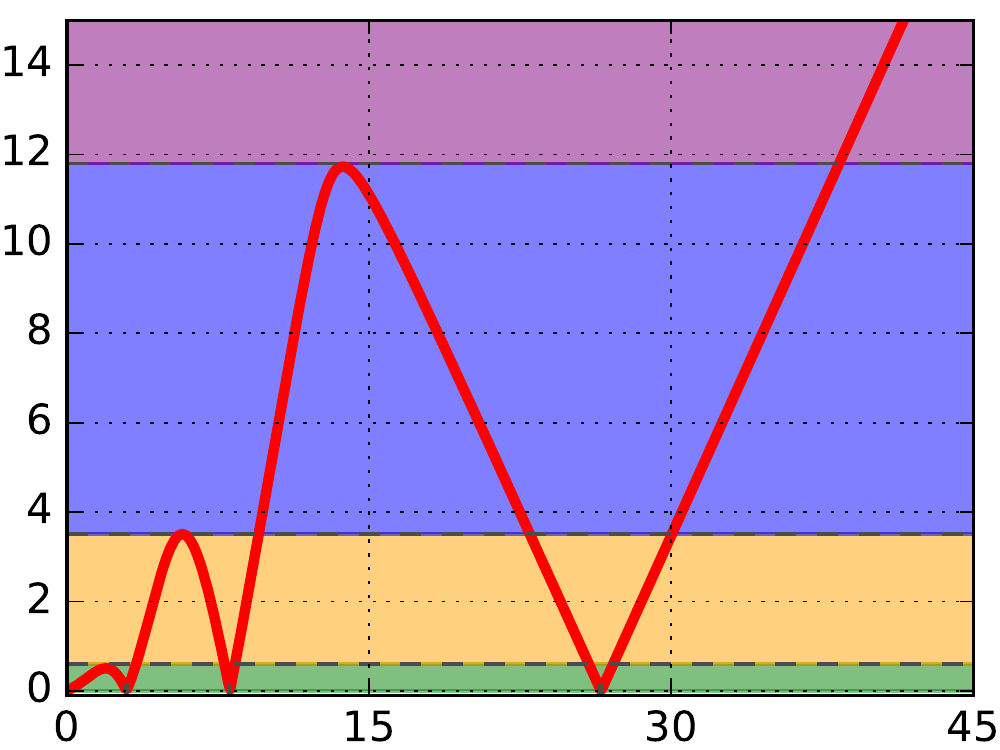}
    };  
    \begin{scope}[x={(image.south east)},y={(image.north west)}]
      \draw[very thick, color=green] (0.155, 0.25) -- (0.66, 0.25);
      \filldraw [fill=green] (0.162,0.25) circle [radius=0.055cm];
      \filldraw [fill=green] (0.205,0.25) circle [radius=0.055cm];
      \filldraw [fill=green] (0.255,0.25) circle [radius=0.055cm];
      \filldraw [fill=green] (0.545,0.25) circle [radius=0.055cm];
      \filldraw [fill=green] (0.66,0.25) circle [radius=0.055cm];

      \node  at (-0.1, 0.5) {$|u'_a(\bar L)|$};
      \node at (0.5, -0.05) {$|a|$};
      \node [align=left] (a) at (0.83, -0.035) {$7$ solutions};
      \draw [->, thick] (a) -- (0.83, 0.1);
      \node [align=left] at (0.83, 0.225) {$5$ solutions};
      \node [align=left] at (0.83, 0.715) {$3$ solutions};
      \node [align=left] at (0.83, 0.925) {$1$ solution};

      \node at (0.11, 0.16) {$\bar r_3$};
      \node at (0.19, 0.35) {$\bar r_2$};
      \node at (0.35, 0.82) {$\bar r_1$};

      \node at (0.14, 0.025) {$a_3$};
      \node at (0.24, 0.025) {$a_2$};
      \node at (0.6, 0.025) {$a_1$};
      
    \end{scope}

  \end{tikzpicture}
  \caption{The graph of $|u'_a(\bar L)|$ versus $|a|$.  The
    main text shows that if $\bar r_{i+1}<|\bar r|<\bar r_i$, then there
    are $2i+1$ non-trivial solutions. The green line illustrates the
    case $\bar r_3<|\bar r|<\bar r_2$ where there are 5 non-trivial
    solutions (green disks).}
  \label{fig:kappa_i}
\end{figure}

We now analyze the number of solutions for different parameters.
Denote by $u'_a(\bar s)$ the function $u'(\bar s)$
obtained by integrating from $(u(0),u'(0))=(0,a)$.
Following Kolodner, let $z_i(a)$ be the $i$-th zero
of $u'_a(\bar s)$.  The function $z_i(a)$ has the following
properties (Theorem 2~\cite{Kolodner1955}):
\begin{itemize}
\item $z_i(a)$ is well-defined for all $i \in \mathbb{N}$ and is a
  strictly increasing function of $a$ over $(0, +\infty)$;
\item  $\lim_{a\rightarrow 0}z_i(a)=h_i^2 / 4=:\lambda_i$ where $h_i$ is the
  $i$-th zero of the Bessel function $J_0$ (Appendix~\ref{sec:lowamp});
\item $\lim_{a\rightarrow +\infty }z_i(a) = +\infty$.
\end{itemize}

Next, let us define $a_i$ as the absolute value of $a$ such that
$z_i(a)=\bar L$, \ie
\begin{equation}
  \label{eq:12}
  a_i := |z_i^{-1}(\bar L)|.
\end{equation}
By the properties of $z_i$, $a_i$ exists if and only if
$\lambda_i \leq \bar L$ and when it exists, it is unique since
$z_i(a)$ is a strictly increasing function of~$a$.


Fig.~\ref{fig:2Dshooting}\textbf{B} shows the construction of $a_1$,
$a_2$, $a_3$. One can also observe that the $a_i$'s form a decreasing
sequence, \ie
\[
  a_1 > a_2 > a_3 > \dots > a_n,
\]
where $n$ is the largest $i$ so that
$ \lambda_i \leq \bar L$.

We now turn to the general case where $\bar r$ is not necessarily
zero.  Consider fixed parameters $(\bar r, \bar L)$, it can be seen
that the number of configurations equals the number of intersections
that the $u'_a(\bar L)$ versus $a$ graph makes with the
horizontal lines $u'_a(\bar L) = \bar r$ and
$u'_a(\bar L) = -\bar r$.

In fact, we can simplifies further.  Since $\bar r$ and $- \bar r$
refer to the same radius and that $\rho(s)$ and $-\rho(s)$ refer to
the same shape function, the number of intersections the
$|\bar\rho_a(\bar L)|$ versus $|a|$ graph makes with the horizontal
line $|u'_a(\bar L)|= |\bar r|$ equals the number of
configurations (Fig.~\ref{fig:kappa_i}).

By inspecting Fig.~\ref{fig:kappa_i}, $|u'_a(\bar L)|$ is zero at
$a_i$ and $a_{i+1}$; note moreover that $|u'_a(\bar L)|$ increases as
$a$ increases from $a_{i+1}$, reaches a maximum at some $a_i^*$, and
then decreases as $a$ increases from $a_i^*$ to $a_i$~\footnote{This
  claim is based on numerical observations.}.  Let us denote the
maximum reached by $|u'_a(\bar L)|$ between $a_i$ and $a_{i+1}$ by
$\bar r_i$, \ie
\begin{align}
  \label{eq:13}
\bar r_i&:=|u'_{a_i^*}(\bar L)| = \max_{a_{i+1}<a<a_i} |u'_a(\bar L)|, \quad\text{for } i < n;\\
\bar r_n&:=\max_{0<a<a_n} |u'_a(\bar L)|.
\end{align}
One can next observe that the $\bar r_i$'s form a decreasing
sequence\,\footnote{We have not yet been able to prove
  rigorously that the sequence is indeed decreasing.\label{fn:1}}, \ie
\[
\bar r_1 > \bar r_2 > \bar r_3  > \dots > \bar r_n.
\] 

One can now state the following proposition, whose proof results
directly from the examination of Fig.~\ref{fig:kappa_i}.

\begin{proposition}
  \label{prop:nsol}
  Let $n$ be the largest $i$ so that $\lambda_i \leq \bar L$.  The
  number of non-trivial configurations of an uniformly rotating chain
  depends on $|\bar r|$ as follows:
  \begin{enumerate}
  \item if $|\bar r|=0$, there are $n$ non-trivial solutions;
  \item if $0 < |\bar r|<\bar r_n$,
    there are $2n+1$ non-trivial solutions;
  \item if $\bar r_{i+1}<|\bar r|<\bar r_i$ for $i\in[1, n-1]$,
    there are $2i+1$ non-trivial solutions;
  \item if $|\bar r|=\bar r_i$ for $i=[1,n]$, there are $2i$ non
    trivial solutions;
  \item if $|\bar r|>\bar r_1$, there is one non-trivial solution.
  \end{enumerate}
\end{proposition}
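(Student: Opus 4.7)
The plan is to reduce the counting of configurations to the counting of intersections of a horizontal line with the graph of a single real-valued function, and then to argue geometrically as sketched in Fig.~\ref{fig:kappa_i}. Specifically, for each $a\geq 0$ define $g(a):=|u'_a(\bar L)|$, where $u_a$ is the unique solution of Eq.~\eqref{eq:new} with $u_a(0)=0$, $u'_a(0)=a$. By the sign-flip remark following the shooting method, the map $a\mapsto u_a$ restricted to $a\geq 0$ covers all configurations once, and a configuration with attachment parameter $\bar r$ corresponds precisely to a value $a\geq 0$ with $g(a)=|\bar r|$. Hence the number of non-trivial configurations equals the cardinality of $g^{-1}(|\bar r|)\setminus\{0\}$.

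Next, I would establish the qualitative shape of $g$ on $(0,\infty)$ using the properties of $z_i(a)$ inherited from Kolodner:
\begin{itemize}
\item $g(a_i)=0$ for $i=1,\dots,n$ and these are the only positive zeros, by the very definition of $a_i$ and the fact that $z_i$ is strictly increasing in $a$; in particular, on each open interval $(a_{i+1},a_i)$ (with the convention $a_{n+1}:=0$) the function $g$ has no zero.
\item On each such interval, $g$ rises from $0$, attains a single interior maximum $\bar r_i$ at some $a_i^\star$, then decreases back to $0$; this is taken as a structural hypothesis, supported by the numerical observation that the authors flag in footnote~\ref{fn:1}.
\item For $a>a_1$ the function $g$ is strictly increasing and unbounded, so it crosses every level $y>0$ exactly once on $(a_1,\infty)$; this again follows from the monotonicity of $z_i$ combined with the fact that no new zero of $u'_a$ enters $[0,\bar L]$ for $a>a_1$.
\item The peak sequence satisfies $\bar r_1>\bar r_2>\cdots>\bar r_n$.
\end{itemize}

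With these structural facts, the proposition reduces to a case split on where $|\bar r|$ sits in the sequence $0<\bar r_n<\bar r_{n-1}<\cdots<\bar r_1$. If $|\bar r|=0$, the solutions are exactly the $a_i$'s, giving $n$ configurations. If $0<|\bar r|<\bar r_n$, each of the $n$ humps contributes two preimages and the tail $(a_1,\infty)$ contributes one, totalling $2n+1$. For $\bar r_{i+1}<|\bar r|<\bar r_i$, exactly the $i$ highest humps are crossed transversally (two preimages each), the $n-i$ smaller humps are missed, and the tail supplies one, giving $2i+1$. For $|\bar r|=\bar r_i$, the $i$-th hump is touched tangentially at $a_i^\star$ (one preimage), the $i-1$ higher humps are crossed twice each, and the tail once, for a total of $2i$. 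Finally, for $|\bar r|>\bar r_1$, every hump is missed and only the tail contributes, giving one configuration.

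The main obstacle is the second bullet above: proving that $g$ has a \emph{single} local maximum on each interval $(a_{i+1},a_i)$, and that the resulting peak values $\bar r_i$ form a strictly decreasing sequence. These are exactly the points the authors explicitly leave as numerical observations. A rigorous argument would most plausibly proceed by analyzing the variational equation of Eq.~\eqref{eq:new} along the family $u_a$, showing that the $a$-derivative $\partial_a u'_a(\bar L)$ changes sign exactly once in each $(a_{i+1},a_i)$ (e.g.\ via a Sturm-comparison or a Prüfer-angle argument applied to the linearization); the monotonicity of the peak heights would then follow by comparing the oscillation counts of the variational equation across consecutive intervals. Modulo this analytical input, the proposition is just the bookkeeping described above.
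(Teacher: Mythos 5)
Your proposal follows essentially the same route as the paper: the paper's proof is precisely the reduction to counting intersections of the horizontal line $|u'_a(\bar L)|=|\bar r|$ with the graph of $a\mapsto|u'_a(\bar L)|$ (Fig.~\ref{fig:kappa_i}), resting on the same single-hump structure and decreasing peak sequence that the authors themselves flag as numerical observations in their footnotes. Your bookkeeping matches theirs case by case, and your sketch of a Sturm/Pr\"ufer argument to close the gap goes slightly beyond what the paper attempts, but the core argument is identical.
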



\subsection{Rotation modes}
\label{sec:modes}

By the change of variable~\eqref{eq:change_of_var},
$u'=\rho\omega^2/g$, the number of zeros of $u'(\bar s)_{\bar
  s\in(0,\bar L)}$ corresponds to the number of times the chain
crosses the rotation axis. We can now give an operational definition
of rotation modes.

\begin{definition}[Rotation modes]
  A chain is said to be rotating in mode $i$ if its shape crosses the
  axis exactly $i$ times or, in other words, if the function
  $u'(\bar s)_{\bar s\in(0,\bar L)}$ has exactly $i$ zeros.
\end{definition}

Let us re-interpret Prop.~\ref{prop:nsol} in terms of rotation
modes. Consider a positive $\bar r$ verifying
$\bar r_{i+1}<\bar r<\bar r_i$. In Fig.~\ref{fig:kappa_i}, the
horizontal line $|u'(\bar L)| = \bar r$ intersects the graph of
$|\bar \rho_a(\bar L)|$ versus $|a|$ at $2i+1$ points. Call the
$X$-coordinates of these points $b_1>b_2>\dots>b_{2i+1}$. Remark that
\begin{itemize}
\item $b_1>a_1$, thus by definition of $a_1$, the function
  $u'_{b_1}(\bar s)$ has no zero in $(0,\bar L)$, \ie the chain
  rotates in mode 0;
\item $a_1>b_2>b_3>a_2$, thus by definition of $a_1,a_2$, the
  functions $u'_{b_2}(\bar s)$ and $u'_{b_3}(\bar s)$ have
  each one zero in $(0,\bar L)$, \ie the chain rotates in mode 1;
\item more generally, for any $k\in [1,i]$,
  $a_{k-1}>b_{2k}>b_{2k+1}>a_k$, thus by definition of $a_{k-1},a_k$,
  the functions $u'_{b_{2k}}(\bar s)$ and
  $u'_{b_{2k+1}}(\bar s)$ have each $k$ zeros in $(0,\bar L)$,
  \ie the chain rotates in mode $k$.
\end{itemize}

Figure~\ref{fig:multi_solutions} illustrates the above discussion for
$i=2$. 

\begin{figure*}[htp]
  \centering
  \includegraphics[width=\textwidth]{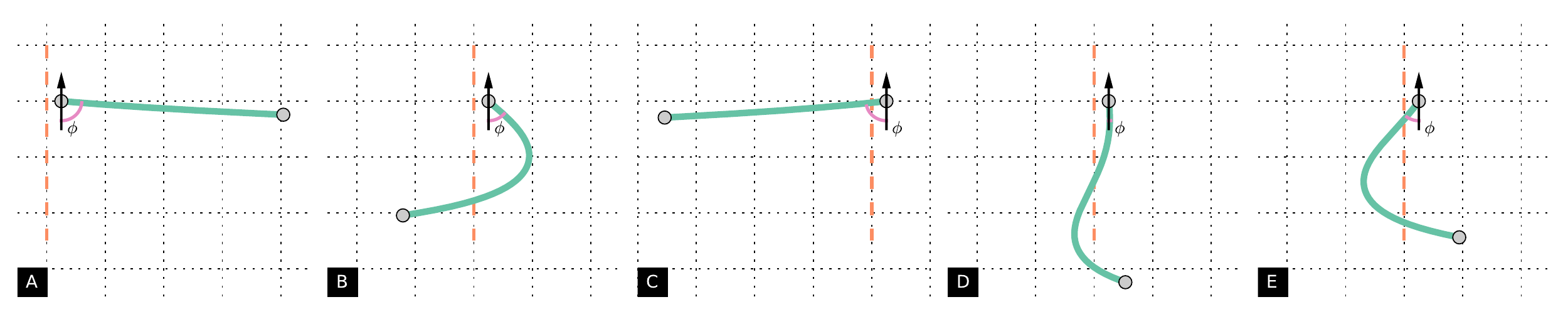}
  \caption{Rotation modes for $\bar r$ where $\bar r_3< \bar r<\bar
    r_2$ ($i=2$). According to Proposition~\ref{prop:nsol}, there are
    $2i+1=5$ solutions, depicted in A--E. A rotation is said to be in
    mode $i$ if the chain shape crosses the rotation axis (dashed
    line) $i$ times. \textbf{A}: solution in mode~0, corresponding to
    $b_1$ (for the explanation of the numbers $b_i$, see main
    text). \textbf{B, C}: solutions in mode~1, corresponding to
    $b_2,b_3$. \textbf{D, E}: solutions in mode~2, corresponding to
    $b_4,b_5$. In addition, the analysis of
    Section~\ref{sec:stability_analysis} shows that A, B and D are
    stable, while C and E are unstable.}
  \label{fig:multi_solutions}
\end{figure*}

\section{Analysis of the configuration space of the rotating chain}

\label{sec:configuration_space}

In the previous section, we have established a relationship between
the control inputs and the configurations. Here, we investigate the
properties of the configuration space and of the subspaces of stable
configurations. In particular, a crucial question for manipulation,
which we address, is whether the stable subspace is \emph{connected},
allowing for stable and controlled transitions between different
modes.

\subsection{Parameterization of the configuration space}

From now on, we make two technical assumptions: (i) the distance
$\rho(0)$ from the free end of the chain to the rotation axis is
upper-bounded by some $\rho_{\max}$; (ii) the rotation speed $\omega$
is upper-bounded by some $\omega_{\max}$. Note that these two
assumptions do not reduce the generality of our formulation since they
simply assert that there exist some finite bounds, which could be
arbitrarily large. From Eq.~(\ref{eq:defa}) and~\eqref{eq:boundary_1},
the two assumptions next imply that $a$ and $\bar L$ are upper-bounded
by some constants $a_{\max}$ and $\bar L_{\max}$. We can now prove a
first characterization of the configuration space.

\begin{proposition}[and definition]
  \label{prop:A_to_C}
  Define the parameter space $\calA$ by
    $$
    \calA := (0, a_{\max}) \times (0, \bar L_{\max}).
    $$
  There exists a homeomorphism $f:\calA\to\calC$.
\end{proposition}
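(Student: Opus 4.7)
The plan is to build $f$ explicitly from the dimensionless shooting procedure of Section~\ref{sec:shooting} and then verify the four ingredients of a homeomorphism: well-definedness, bijectivity, continuity, and continuity of the inverse. Given $(a, \bar L) \in \calA$, I first set $\omega := \sqrt{g\bar L/L}$ and let $u(\cdot\,;a,\bar L)$ be the unique solution of~\eqref{eq:new} on $[0,\bar L]$ with $u(0)=0$ and $u'(0)=a$. Because the right-hand side $u/\sqrt{\bar s^2+u^2}$ is globally bounded by $1$ (so no blow-up) and locally Lipschitz, with a continuous extension at $\bar s=0$ by the L'H\^{o}pital observation already used, the solution exists and is $C^2$ on all of $[0,\bar L_{\max}]$. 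I then recover the shape function $\rho$ by reading~\eqref{eq:rho2} and~\eqref{eq:rho1} backwards, applying if necessary the symmetry $u\mapsto -u$ noted at the end of Section~\ref{sec:shooting} to enforce the normalization $\rho(0)\geq 0$. Reversing the derivation of~\eqref{eq:new} shows that the resulting $(\omega,\rho)$ obeys~\eqref{eq:x1} together with $F(0)=0$ and the inextensibility constraint~\eqref{eq:11}, hence belongs to $\calC$.

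Bijectivity is then largely automatic. For injectivity, observe that from $(\omega,\rho)$ one reads off $\bar L=L\omega^2/g$ and, via~\eqref{eq:rho2} evaluated at $s=0$, the scalar $a=|\rho(0)|\omega^2/g$, so the preimage of any configuration is a single point. For surjectivity, given $(\omega,\rho)\in\calC$ the pair $(a,\bar L):=(\rho(0)\omega^2/g,\,L\omega^2/g)$ lies in $\calA$ by the two bounding assumptions, the $u$ built from $\rho$ through the change of variable satisfies~\eqref{eq:new} with these initial data, and uniqueness for initial value problems attached to~\eqref{eq:new} forces $f(a,\bar L)=(\omega,\rho)$.

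For the topology, I would endow $\calC$ with the one inherited from $\bbR\times C^1([0,L])$, under which the projection $(\omega,\rho)\mapsto \omega$ and the evaluation $\rho\mapsto \rho(0)$ are both continuous. In that topology the inverse $f^{-1}(\omega,\rho)=(\rho(0)\omega^2/g,\,L\omega^2/g)$ is manifestly continuous. Continuity of $f$ itself follows from the classical theorem on continuous dependence of ODE solutions on initial data and on the endpoint of the integration interval, composed with the smooth formula that reconstructs $\rho(s)=-(g/\omega^2)\,u'(s\omega^2/g)$ from $u'$ and $\omega$.

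The main obstacle is less analytical than bookkeeping: one has to nail down a sign convention compatible both with the symmetry $\rho\mapsto -\rho$ and with the positivity $\rho(0)\geq 0$ of Definition~\ref{def:conf}, and one has to specify a topology on $\calC$ fine enough to make $f$ continuous while still allowing $f^{-1}$ to be continuous. Once these choices are pinned down, every remaining step reduces to standard Picard--Lindel\"of theory together with the a priori bound $|u/\sqrt{\bar s^2+u^2}|\leq 1$ that prevents blow-up on $[0,\bar L_{\max}]$.
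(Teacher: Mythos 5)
Your proposal is correct and follows essentially the same route as the paper's own proof: the map $f$ is the shooting construction, injectivity and surjectivity are obtained by reading $a$ and $\bar L$ off the configuration via $\rho(0)$ and $\omega$, and continuity in both directions comes from the Lipschitz property of the ODE and continuous dependence on initial data. Your added care about the topology on $\calC$ and the sign normalization $\rho(0)\geq 0$ only makes explicit what the paper leaves implicit.
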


This proposition implies that, despite (a) the potentially infinite
dimension of the space of all shape functions $\rho$ and (b) the
one-to-many mapping between control inputs and configurations, the
configuration space of the rotating chain is actually of dimension 2
and has a very simple structure.  Note that $\calA$ is essentially a
2D box.

The first dimension, $a$, is proportional to the distance of the free
end to the rotation axis. Thus, choosing the free end rather than the
attached end as reference point allows finding a one-to-one mapping
with the shape function. The second dimension, $\bar L$, is defined by
$\bar L:=L\omega^2/g$. Since the length $L$ of the chain is fixed,
$\bar L$ changes as a function of the angular speed $\omega$.

To simplify the notations, we define $\vec u:=(u,u')$ and rewrite
Eq.~\eqref{eq:new} as a dimensionless ODE
\begin{equation}
  \label{eq:X}
  \frac{\d {\vec u}}{\d {\bar s}} = \bfX(\vec u,\bar s).
\end{equation}
We can now give a proof for Proposition \ref{prop:A_to_C}. 

\begin{proof}
  The mapping $f$ is essentially the shooting method described in
  Section~\ref{sec:shooting}. Given a pair $(a,\bar L)\in \calA$, we
  first obtain $\omega$ from $\bar L$ using the relationship
  $\bar L = L\omega ^2/g$. Next, we integrate the ODE~\eqref{eq:X}
  from the initial condition
  \[
    \vec u(0) = ( 0, a )
  \]
  until $\bar s=\bar L$ to obtain $u'(\bar s)$ for
  $ \bar s \in (0, \bar L)$.  Finally, we obtain $\rho$ from
  $u'$ using Eq.~\eqref{eq:change_of_var}. 

  (1) Surjectivity of $f$. Let $(\omega,\rho)\in \calC$. Since $\rho$
  verifies~(\ref{eq:x1}), one can perform the change of
  variables~(\ref{eq:change_of_var}) and obtain $u$ and
  $u'$. Next, consider $a=u'(0)$ and
  $\bar L=L\omega ^2/g$. One has clearly $a\in(0,a_{\max})$,
  $\bar L\in(0,\bar L_{\max})$, and $f((a,\bar L))=(\omega,\rho)$.

  (2) Injectivity of $f$. Assume that there are $(a_1, \bar L_1) \neq
  (a_2, \bar L_2)$ such that $f(a_1,\bar L_1) = f(a_2,\bar L_2) =
  (\omega,\rho)$. One has $a_1=a_2=-\rho(0)\omega^2/g$ and $\bar
  L_1=\bar L_2=L\omega^2/g$, which implies the injectivity.

  (3) Continuity of $f$. We show in the Appendix~\ref{sec:ODE} that
  the ODE~\eqref{eq:X} is Lipschitz. It follows that the function
  $u'(\bar s)$ for $0 \leq \bar s \leq \bar L$ depends continuously on
  its initial condition, which implies that $\rho(s)$ depends
  continuously on $a$.

  (4) Continuity of $f^{-1}$. It can be seen from the injectivity
  proof that $a$ and $\bar L$ depend continuously on $\omega$ and
  $\rho(0)$, and the latter depends in turn continuously on $\rho$.
\end{proof}

Next, we establish a homeomorphism between the parameter space and a
smooth surface in 3D, which allows an intuitive visualization of the
configuration space.

\begin{proposition}[and definition]
  \label{prop:A_to_S}
  For a given $a\in(0,a_{\max})$, integrate the differential
  equation~\eqref{eq:X} from $(0,a)$ until $\bar s=\bar L_{\max}$. The
  set
  $(\bar s,u(\bar s),u'(\bar s))_{\bar s\in(0,\bar L_{\max})}$
  is then a 1D curve in $\mathbb{R}^3$. The collection of those curves
  for $a$ varying in $(0,a_{\max})$ is a 2D surface in $\mathbb{R}^3$,
  which we denote by $\calS$ (see Fig.~\ref{fig:3Dshooting}).

  There exists a homeomorphism $l:\calA\to\calS$.
\end{proposition}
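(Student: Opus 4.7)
The plan is to define $l$ explicitly via the shooting construction and then verify that it is a continuous bijection with continuous inverse. For $(a,\bar L)\in\calA$, let $\vec u_a(\bar s):=(u_a(\bar s),u'_a(\bar s))$ denote the unique solution of the ODE~\eqref{eq:X} with initial condition $\vec u_a(0)=(0,a)$, and set $l(a,\bar L):=(\bar L,u_a(\bar L),u'_a(\bar L))$. By the very definition of $\calS$ in the statement, $l(\calA)=\calS$, so $l$ is surjective onto $\calS$.

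For injectivity, suppose $l(a_1,\bar L_1)=l(a_2,\bar L_2)$. Reading off the first coordinate gives $\bar L_1=\bar L_2=:\bar L$, and then $\vec u_{a_1}(\bar L)=\vec u_{a_2}(\bar L)$. Since $\bfX$ is Lipschitz on the relevant domain (Appendix~\ref{sec:ODE}), the time-$\bar L$ flow of~\eqref{eq:X} is a bijection of phase space onto its image; integrating backwards from $\bar s=\bar L$ to $\bar s=0$ therefore yields $\vec u_{a_1}(0)=\vec u_{a_2}(0)$, whence $a_1=a_2$. Continuity of $l$ itself is then standard: Lipschitz dependence on initial data implies that $(a,\bar s)\mapsto\vec u_a(\bar s)$ is jointly continuous, so $(a,\bar L)\mapsto(\bar L,u_a(\bar L),u'_a(\bar L))$ is continuous.

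For continuity of $l^{-1}$, the projection of a point of $\calS$ onto its first coordinate recovers $\bar L$ continuously. To recover $a$, apply the backward flow of~\eqref{eq:X} from time $\bar L$ back to time $0$ starting from the terminal state $(u_a(\bar L),u'_a(\bar L))$, and read off the second component of the resulting initial datum. This backward flow is continuous in both the terminal state and the terminal time, again by Lipschitz continuous dependence, so the composition $l^{-1}$ is continuous.

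The main obstacle I foresee is not in any single topological step but in ensuring that the Lipschitz estimate holds uniformly across the whole range $\bar s\in(0,\bar L_{\max})$, including near $\bar s=0$ where the right-hand side of~\eqref{eq:new} is only defined by the limit computed after Eq.~\eqref{eq:boundary_1}. Once that Lipschitz property is in hand (which is what Appendix~\ref{sec:ODE} provides, and is also the same ingredient invoked in the proof of Proposition~\ref{prop:A_to_C}), everything else is a direct application of standard ODE continuous-dependence results. The geometric consequence that $\calS$ really is a topological 2-manifold embedded in $\bbR^3$ with no self-intersections then follows automatically from the bicontinuity and injectivity of $l$.
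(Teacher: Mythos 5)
Your proposal is correct and follows essentially the same route as the paper: the same shooting-based definition of $l$, surjectivity by definition of $\calS$, injectivity via uniqueness of solutions (backward flow), and continuity in both directions via Lipschitz continuous dependence on initial/terminal data. The only cosmetic difference is that the paper spells out the continuity of $l^{-1}$ with an explicit $e^{M\bar L}$ Gronwall-type estimate and a triangle inequality, whereas you invoke continuity of the backward flow directly — the same ingredient in a different wrapper.
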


\begin{figure}[ht]
  \centering
  \begin{tikzpicture}
    \node[anchor=south west,inner sep=0] (image) at (0,0){
    \includegraphics[width=0.5\textwidth]{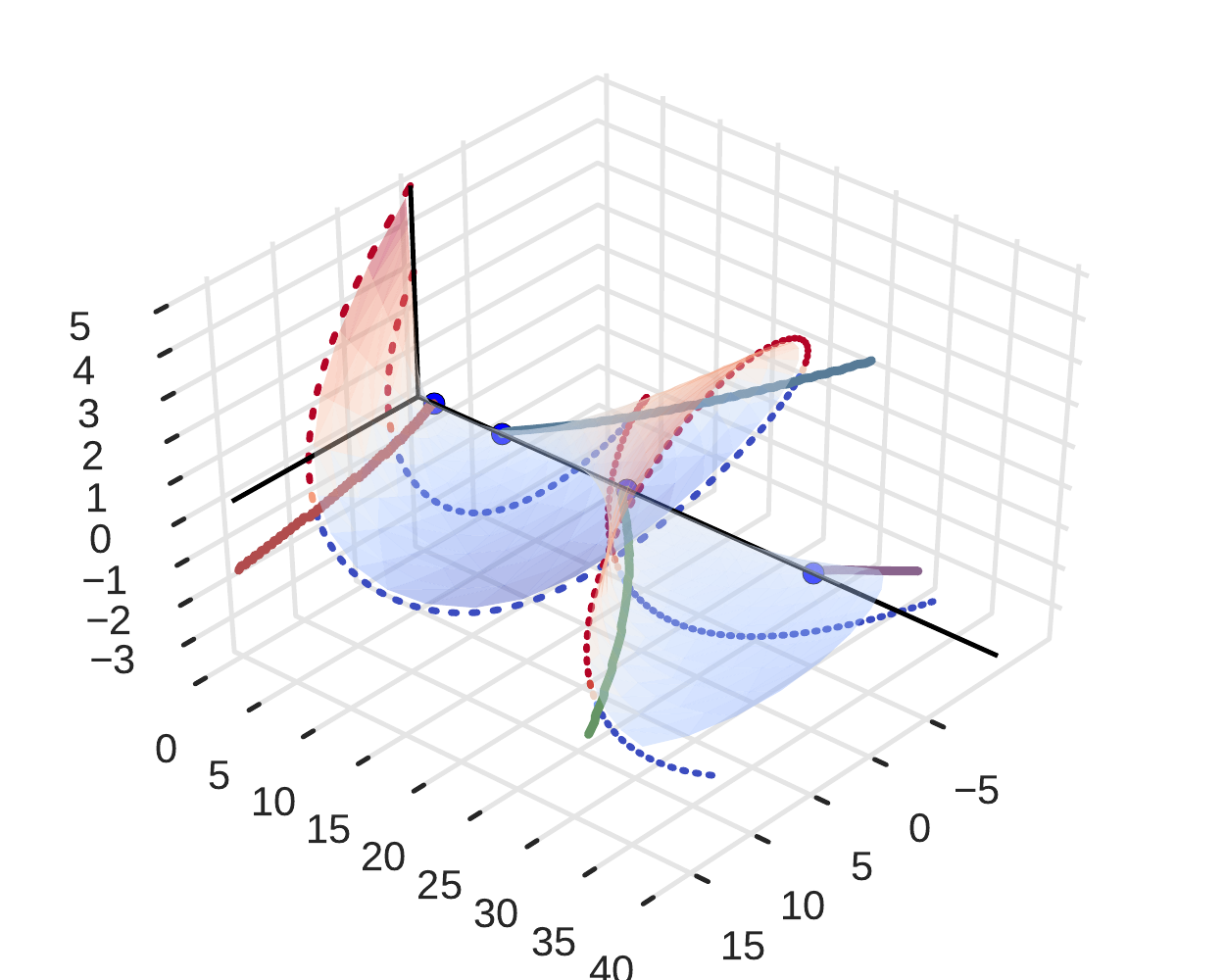}};
    \begin{scope}[x={(image.south east)},y={(image.north west)}]
      \node at (0.36, 0.78) {$u'$};
      \node at (0.2, 0.54) {$u$};
      \node at (0.75, 0.33) {$\bar s$};
      \node (a) at (0.35, 0.3) {$\cal S$};
      \draw [->, thick] (a) -- (0.4, 0.43);
    \end{scope}

  \end{tikzpicture}
  \caption{The surface $\calS$ that is homeomorphic to the
    configuration space $\calC$. We depict two solution curves on the
    surface $\calS$ (dashed lines), integrated from two different
    values of $a$ (large and medium). Red, blue, green, and purple
    lines represent respectively the first, second, third and fourth
    zero-radius loci (see Proposition~\ref{prop:zero}).}
  \label{fig:3Dshooting}
\end{figure}


\begin{proof}
  The construction of $l$ follows from the definition: given a pair
  $(a,\bar L)\in \calA$, integrate~(\ref{eq:X}) from $(0,a)$ until
  $\bar s=\bar L$. Then define $l(a,\bar L):=(\bar L,\vec u(\bar L))$.

  (1) Surjectivity of $l$. Consider a point $(\bar L,\vec u)
  \in\calS$. By definition of $\cal S$, there exists
  $a\in(0,a_{\max})$ so that integrating~(\ref{eq:X}) from $(0,a)$
  reaches $\vec u$ at $\bar s = \bar L$. Clearly,
  $l(a,\bar L)=(\bar L,\vec u)$.

  (2) Injectivity of $l$. This results from the Uniqueness theorem for
  ODEs, see Appendix~\ref{sec:ODE}.

  (3) Continuity of $l$. From the Continuity theorem for ODEs
  (Appendix~\ref{sec:ODE}), it is clear that the end point $(\bar
  L,\vec u(\bar L)) \in \calS$ depends continuously on the initial
  condition $a$.

  (4) Continuity of $l^{-1}$. Consider two points $(\bar L_1,\vec u^*_1),
  (\bar L_2,\vec u^*_2)\in \calS$ that are sufficiently close to each
  other, \ie
  \[
  |\bar L_1-\bar L_2| \leq \delta,\quad \|\vec u^*_1-\vec u^*_2\| \leq \delta,
  \]
  for some $\delta$ that we shall choose later. Consider the curves
  $\vec u_1,\vec u_2$ such that $\vec u_1(\bar L_1)=\vec u_1^*$ and
  $\vec u_2(\bar L_2)=\vec u_2^*$. By the Continuity theorem
  (Appendix~\ref{sec:ODE}) one has for some appropriate constant $K$, 
  \[
  \|\vec u_1(0)-\vec u_2(0)\| \leq e^{M\bar L_1}\|\vec u_1(\bar
  L_1)-\vec u_2(\bar L_1) \|
  \]
  \[
  \leq e^{M\bar L_1} \left( \|\vec u_1(\bar L_1)-\vec u_2(\bar L_2))\| +
    \|\vec u_2(\bar L_2)-\vec u_2(\bar L_1)\|\right)
  \]
  \[
  \leq  e^{M\bar L_1} (\delta + M|\bar L_1-\bar L_2|) = e^{M\bar L_1}(M+1)\delta,
  \]
  where the last inequality come from the uniform boundedness of
  $\vec u$. For any $\epsilon$, it suffices therefore to choose
  $\delta:=\frac{\epsilon e^{-M\bar L_1}}{M+1}$ so that
  $|a_1-a_2|=\|\vec u_1(0)-\vec u_2(0)\|\leq \epsilon$, which proves the
  continuity of $l^{-1}$.
\end{proof}

Combining Propositions~\ref{prop:A_to_C} and~\ref{prop:A_to_S}, we
obtain the following theorem.

\begin{theorem}
  \label{theo:homeomorphism}
  The configuration space $\calC$ of the rotating chain is
  homeomorphic to the 2D surface $\calS$ represented in
  Fig.~\ref{fig:3Dshooting}.
\end{theorem}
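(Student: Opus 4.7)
The plan is to deduce the theorem directly by composing the two homeomorphisms already established in Propositions~\ref{prop:A_to_C} and~\ref{prop:A_to_S}. Since $f:\calA\to\calC$ is a homeomorphism, the inverse $f^{-1}:\calC\to\calA$ exists and is continuous. Composing with $l:\calA\to\calS$, which is also a homeomorphism, I obtain the candidate map $h:=l\circ f^{-1}:\calC\to\calS$.

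I would then verify the three homeomorphism properties by invoking standard facts about compositions of bijective continuous maps. Bijectivity of $h$ follows from bijectivity of $f^{-1}$ and $l$. Continuity of $h$ follows because $f^{-1}$ is continuous (by Prop.~\ref{prop:A_to_C}) and $l$ is continuous (by Prop.~\ref{prop:A_to_S}). For continuity of $h^{-1}=f\circ l^{-1}$, the same reasoning applies using the continuity of $l^{-1}$ and $f$. No additional analytic work is required, since all of the real content, notably the Lipschitz and uniqueness properties of the ODE~\eqref{eq:X} and the injectivity arguments, was already absorbed into Propositions~\ref{prop:A_to_C} and~\ref{prop:A_to_S}.

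There is essentially no obstacle in this step; the theorem is a one-line corollary of the two preceding propositions. The only thing I would be careful about is to state explicitly that the parameter space $\calA$ serves as a common chart for both $\calC$ and $\calS$, so that the identification $\calC\cong\calS$ passes through $\calA$ rather than being constructed directly. This makes the geometric picture in Fig.~\ref{fig:3Dshooting} a faithful representation of $\calC$: each point $(\bar s,\vec u(\bar s))$ on $\calS$ corresponds to a unique configuration $(\omega,\rho)\in\calC$, with the $\bar s$-coordinate encoding the rotation speed via $\bar s=L\omega^2/g$ and the $\vec u$-coordinate encoding the shape function via the change of variables~\eqref{eq:change_of_var}.
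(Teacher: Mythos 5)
Your proposal is correct and matches the paper exactly: the paper states the theorem as an immediate consequence of combining Propositions~\ref{prop:A_to_C} and~\ref{prop:A_to_S}, i.e.\ the composition $l\circ f^{-1}:\calC\to\calS$ through the common parameter space $\calA$. No further verification is needed beyond what those two propositions already establish.
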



\subsection{Zero-radius loci and low-amplitude regime}

Before studying the stable subspaces, we need first to define the
zero-radius loci and the low-amplitude regime in the configurations
space. 

\begin{proposition}[and definition]
  \label{prop:zero}
  Zero-radius loci are configurations whose corresponding attachment
  radii verify $r=0$. Define $\bar L_i := L\omega_i^2/g$ where
  $\omega_i$ is the $i$-th discrete angular speed
  (Appendix~\ref{sec:lowamp}). We have the following properties on the
  surface $\calS$
  \begin{enumerate}[(i)]
  \item The $i$-th zero-radius locus is an infinite curve that
    branches out from the $\bar s$-axis at $(\bar L_i,0,0)$, see
    Fig.~\ref{fig:3Dshooting};
  \item The $i$-th zero-radius locus separates configurations in
    rotation mode $i-1$ from those in rotation mode $i$.
  \end{enumerate}
\end{proposition}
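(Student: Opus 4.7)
The plan is to translate the zero-radius condition into a condition on $u$, exploit the properties of the zero-crossing functions $z_i(a)$ already established from Kolodner, and then use the resulting continuous curves to separate modes via the homeomorphism between $\calS$ and the parameter space $\calA$.

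First I would observe that, via the boundary condition $u'(\bar L) = \bar r = -r\omega^2/g$, the attachment radius $r = 0$ is equivalent to $u'(\bar L) = 0$. Under the parameterization of $\calS$ by $(a,\bar L)\in\calA$, the point $(\bar L,u_a(\bar L),u'_a(\bar L))\in\calS$ therefore lies on the $i$-th zero-radius locus exactly when $\bar L = z_i(a)$. The $i$-th locus on $\calS$ is thus the image of the map
\[
a \mapsto \bigl(z_i(a),\, u_a(z_i(a)),\, 0\bigr), \qquad a\in(0,a_{\max}).
\]

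For property (i), I would argue that this image is a continuous curve using the implicit function theorem applied to $u'_a(\bar s) = 0$. The required non-vanishing of $\partial u'_a/\partial \bar s$ at $\bar s = z_i(a)$ follows from the ODE~\eqref{eq:new}: at a zero $\bar s_0$ of $u'$ one has $u''(\bar s_0) = -u(\bar s_0)/\sqrt{\bar s_0^2+u(\bar s_0)^2}$, which cannot vanish unless $u(\bar s_0)=0$ as well, in which case ODE uniqueness would force $u\equiv 0$, contradicting $a>0$. The endpoint $(\bar L_i,0,0)$ is then identified from the limits $\lim_{a\to 0^+} z_i(a) = \lambda_i = \bar L_i$ (Kolodner's theorem combined with $\lambda_i = h_i^2/4 = L\omega_i^2/g$) together with $\lim_{a\to 0^+} u_a(z_i(a)) = 0$, which comes from the low-amplitude analysis in Appendix~\ref{sec:lowamp} where $u_a$ is, to leading order, proportional to $a$ times a Bessel-type profile and so its amplitude vanishes. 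The ``infinite'' character of the curve reflects the fact that, letting the artificial cap $a_{\max}$ grow, $z_i(a)\to +\infty$ as $a\to +\infty$, so the curve extends indefinitely on $\calS$.

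For property (ii), I would work in the parameter space $\calA$, which is homeomorphic to $\calS$ by Theorem~\ref{theo:homeomorphism}. The rotation mode of the configuration corresponding to $(a,\bar L)\in\calA$ equals the number of zeros of $u'_a$ in $(0,\bar L)$, which is exactly $k$ iff $z_k(a)<\bar L<z_{k+1}(a)$, with the convention $z_0\equiv 0$. Hence the mode-$(i-1)$ region and the mode-$i$ region in $\calA$ are $\{z_{i-1}(a)<\bar L<z_i(a)\}$ and $\{z_i(a)<\bar L<z_{i+1}(a)\}$, sharing the common boundary $\bar L=z_i(a)$, which is precisely the preimage of the $i$-th zero-radius locus. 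Since $z_{i-1}$, $z_i$, $z_{i+1}$ are continuous and strictly ordered, the graph of $z_i$ is a topological separator: any continuous path on $\calS$ connecting a mode-$(i-1)$ configuration to a mode-$i$ configuration must intersect it.

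The main obstacle I anticipate is the limiting behaviour as $a\to 0^+$ in property (i). While $\lim_{a\to 0^+}z_i(a)=\lambda_i$ is handed to us by Kolodner, showing $u_a(z_i(a))\to 0$ is not purely topological and requires amplitude control on $u_a$; this amounts to formalising the linearisation around $a=0$ so that the nonlinear ODE~\eqref{eq:new} is approximated, over bounded $\bar s$, by a Bessel-type equation whose solutions scale with $a$. The separation statement in (ii), by contrast, is essentially a routine topological argument once the continuity and strict ordering of the $z_i$'s are in hand.
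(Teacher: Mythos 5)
Your proof is correct, and it takes a noticeably more explicit route than the paper's. The paper disposes of (i) in one line by deferring to Kolodner's Theorem~2, and proves (ii) by an informal deformation argument on $\calS$ (any curve deformed from a mode-$(i-1)$ solution still crosses the first $i-1$ loci, followed by a contradiction that essentially restates the separation claim); it never writes the loci down as explicit objects. You instead identify the $i$-th locus concretely as the graph $\bar L = z_i(a)$ in the parameter space $\calA$, equivalently the curve $a\mapsto(z_i(a),u_a(z_i(a)),0)$ on $\calS$, prove its continuity via the implicit function theorem (with the transversality check $u''(\bar s_0)\neq 0$, which the paper omits and which also secures the continuity of $z_i$ that the paper implicitly assumes when it writes $z_i^{-1}(\bar L)$), and obtain (ii) by the intermediate value theorem applied to $\bar L - z_i(a)$ along any path, using the strict ordering $z_{i-1}<z_i<z_{i+1}$. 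This buys a genuinely rigorous separation argument where the paper's is close to circular, at the cost of more machinery. One simplification: the limit $u_a(z_i(a))\to 0$ that you flag as the main obstacle does not require any linearisation or Bessel asymptotics -- since $\vec u\equiv 0$ solves~\eqref{eq:X}, the Continuity theorem of Appendix~\ref{sec:ODE} gives $\|\vec u_a(\bar s)\|\leq e^{M\bar s}\,|a|$ directly, so the amplitude control you need is already available from the paper's own Lipschitz estimate.
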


\begin{proof}
  (i) This property is implied by Kolodner's results, see the first
  paragraph of Sec~\ref{sec:number-configuration} for more details.

  (ii) Consider a rotation in mode $i-1$ and the corresponding curve
  $(\bar s, u_1(\bar s), u_1'(\bar s))_{\bar s\in [0, \bar L_1]}$.  By
  definition, $u_1'(\bar s)$ has $i-1$ zeros in the interval
  $[0, \bar L_{1}]$. Equivalently, we see that the 3D curve
  $(\bar s, u_1(\bar s), u_1'(\bar s))$ crosses the first,
  second\dots $i-1$-th zero-radius locus.  Now, since the loci start
  infinitely near the $\bar s$-axis [point (i)] and extend to
  infinity, any curve deformed from
  $(\bar s, u_1(\bar s), u_1'(\bar s))_{\bar s\in [0, \bar L_1]}$ also
  crosses the same loci.

  Consider now another rotation, which is in mode $i$, and the
  corresponding curve
  $(\bar s, u_2(\bar s), u_2'(\bar s))_{\bar s \in [0, \bar L_2]}$.
  By Theorem~\ref{theo:homeomorphism}, one can associate the two
  rotations with their endpoints
  $(\bar L_1, u_1(\bar L_1), u_1'(\bar L_1))$ and
  $(\bar L_2, u_2(\bar L_2), u_2'(\bar L_2))$ on the surface $\calS$.
  We will show that any continuous path that connect these two points
  necessarily crosses the $i$-th zero-radius locus. Indeed, assume the
  contradiction, it follows that there is a continuous curve ending at
  $(\bar L_2, u_2(\bar L_2), u_2'(\bar L_2))$ that does not cross the
  $i$-th locus. This is a contradiction to our assertion in the first
  paragraph of point (ii).
  
  We have thus established that the $i$-th zero-radius locus separates
  configurations of rotation mode $i-1$ from those in rotation mode
  $i$.
 \end{proof}

\begin{proposition}[and definition]
  \label{prop:1}
  The low-amplitude regime corresponds to configurations associated
  with infinitely small values of $u(\bar s)$ and $u'(\bar s)$, for
  all $\bar s\in(0,\bar L)$.
  \begin{enumerate}[(i)]
  \item The low-amplitude regime corresponds to points on the surface
    $\calS$ that are infinitely close to the $\bar s$-axis (in
    Fig.~\ref{fig:3Dshooting}).
  \item Moreover, this regime corresponds to points on
    the parameter space $\calA$ that have small values of $a$.
  \end{enumerate}
\end{proposition}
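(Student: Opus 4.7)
The plan is to handle the two points separately, treating (i) as essentially a translation of the definition of $\calS$ and (ii) as a continuous-dependence argument for the ODE~\eqref{eq:X}.

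For (i), the surface $\calS$ is, by definition (Proposition~\ref{prop:A_to_S}), the image of the map $(a,\bar s)\mapsto (\bar s, u(\bar s), u'(\bar s))$ in $\mathbb{R}^3$. The $\bar s$-axis is the locus where the second and third coordinates vanish. A configuration in the low-amplitude regime is one for which $u(\bar s)$ and $u'(\bar s)$ are uniformly small on $(0,\bar L)$, so the entire integral curve lies in an arbitrarily thin tube around the $\bar s$-axis, which is exactly what is claimed.

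For (ii), I would argue both directions. The ``$a$ small $\Rightarrow$ low amplitude'' direction follows from the continuous dependence theorem for ODEs, already invoked in the proofs of Propositions~\ref{prop:A_to_C} and~\ref{prop:A_to_S} via Appendix~\ref{sec:ODE}: the trivial solution $\vec u\equiv \bfzero$ is a solution of~\eqref{eq:X} (interpreting the removable singularity at $\bar s=0$ via the limit noted in the Remark, which gives $a/\sqrt{1+a^2}\to 0$ as $a\to 0$), and integrating~\eqref{eq:X} from $(0,a)$ yields a solution depending continuously on $a$. Hence for any $\epsilon>0$ there exists $\delta>0$ such that $|a|<\delta$ implies $\|\vec u(\bar s)\|<\epsilon$ uniformly on $[0,\bar L_{\max}]$. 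The converse direction ``low amplitude $\Rightarrow$ $a$ small'' is immediate, because $a = u'(0)$ by definition of the shooting initial condition, so if $|u'(\bar s)|$ is uniformly small then in particular $|a|$ is small.

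The main subtlety I would watch out for is the behavior of~\eqref{eq:X} at $\bar s=0$, where the vector field is only defined through the limit $a/\sqrt{1+a^2}$. I would therefore either (a) cite directly the Lipschitz property established in Appendix~\ref{sec:ODE}, which already accounts for this, or (b) observe that the right-hand side extends continuously to a function of $(a,\bar s)$ that is Lipschitz in the state variables on any compact set bounded away from $(\vec u,\bar s)=(\bfzero,0)$ with the value at the origin fixed by the limit; the standard Grönwall bound then gives the uniform smallness on $[0,\bar L_{\max}]$ quoted above. Apart from this mild care at the origin, both (i) and (ii) are essentially unpackings of the definitions combined with already-established continuity of the shooting map.
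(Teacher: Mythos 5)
Your proposal is correct and rests on the same machinery as the paper's proof, namely continuous dependence of solutions of~\eqref{eq:X} on the initial condition $a$, justified via the Lipschitz property of Appendix~\ref{sec:ODE}. The decomposition is inverted, though: the paper puts all the substance into part (i) --- arguing that $u(\bar L),u'(\bar L)$ small implies, by continuity of $l^{-1}$ (Proposition~\ref{prop:A_to_S}), that $a$ is small, and hence by forward continuity that the whole curve is small --- and then reads (ii) off as a corollary; you treat (i) as a definitional unpacking and put the two-directional argument into (ii). The one point you should tighten is that ``corresponds'' in (i) is a biconditional, and your (i) only proves the easy direction (low amplitude $\Rightarrow$ the point of $\calS$ lies near the $\bar s$-axis). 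The converse --- a point of $\calS$ near the $\bar s$-axis necessarily comes from a low-amplitude configuration --- is exactly what the paper's proof of (i) is devoted to, and it requires the continuity of $l^{-1}$ (endpoint data small $\Rightarrow$ $a$ small), not just the forward continuity of the shooting map; you have this implication available from your own part (ii) combined with Proposition~\ref{prop:A_to_S}, but you never assemble it for (i). Your attention to the removable singularity at $\bar s=0$ is a welcome extra that the paper handles only implicitly through the Remark in Section~\ref{sec:dimless} and the Lipschitz lemma.
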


\begin{proof}

   (i) It is clear that a low-amplitude rotation has $u(\bar L)$ and
  $u'(\bar L)$ infinitely small. Conversely, if $u(\bar L)$ and
  $u'(\bar L)$ are infinitely small, by the continuity of the mapping
  $l^{-1}$ in the proof of Proposition~\ref{prop:A_to_S}, the initial
  condition $a$ is also infinitely small. Finally, integrating from an
  infinitely small $a$ will yield $u(\bar s)$ and $u'(\bar s)$
  infinitely small for all $\bar s\in(0,\bar L)$.

   (ii) This is true from (i).
\end{proof}

The low-amplitude rotations with zero attachment radius thus
correspond to $(\bar L_i,\delta u,0)$, $i\in\mathbb N$ for small
values of $|\delta u|$. In the sequel, we shall refer to the $i$-th
small-amplitude rotation with zero radius as the point
$(\bar L_i, 0, 0)$ instead of the more correct phase
``$(\bar L_i,\delta u,0)$ for small values of $|\delta u|$''.

\subsection{Stability analysis}
\label{sec:stability_analysis}

So far we have considered the space of all configurations of the
rotating chain, that is, all solutions to the equation of
motion~(\ref{eq:eom}). However, not all configurations are
\emph{stable}; in fact, experiments show that many are not. This
section investigates the structure of the stable subspace -- the
subset of stable configurations -- and discuss stable manipulation
strategies.

To analyze the stability of configurations, we model the chain by a
series of lumped masses, connected by stiff links, see
Fig.~\ref{fig:discrete-chain}.

\begin{figure}[ht]
  \centering
  \begin{tikzpicture}
    \node[anchor=south west,inner sep=0] (image) at (0,0)
    {\includegraphics[]{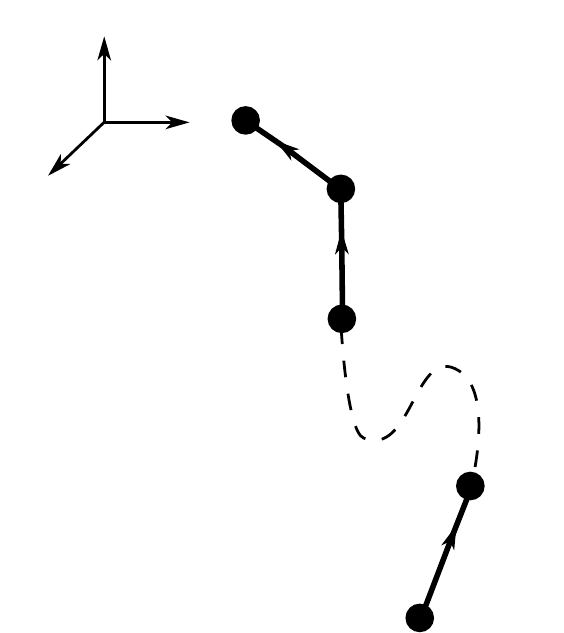}};
    \begin{scope}[x={(image.south east)},y={(image.north west)}]

      \node at (0.06, 0.68) {$x$};
      \node at (0.15, 0.95) {$z$};
      \node at (0.32, 0.75) {$y$};
      \node at (0.1, 0.825) {$\{\rm O\}$};

      \node at (0.85, 0.02) {$\vec x_0$};
      \node at (0.93, 0.22) {$\vec x_1$};
      \node at (0.74, 0.5) {$\vec x_{N-2}$};
      \node at (0.75, 0.7) {$\vec x_{N-1}$};
      \node at (0.52, 0.84) {$\vec x_{N}$};
      \node at (0.75, 0.145) {$\vec l_1$};
      \node at (0.5, 0.6) {$\vec l_{N-1}$};
      \node at (0.5, 0.725) {$\vec l_{N}$};
    \end{scope}

  \end{tikzpicture}
  \caption{\label{fig:discrete-chain} Discretized chain model with $N$ masses.
    }
\end{figure}

Denote the position of the $i$-th mass in the rotating frame
$\{\rm O\}$ by $\vec x_i\in \mathbb{R}^{3}$. The attached end is fixed
in $\{\rm{O}\}$ at $\vec x_N$. The state of the discretized chain is
then given by a $6N$-dimensional vector consisting of the positions
and velocities of the masses
\begin{equation}
  \label{eq:14}
  \vec y := [\vec x_0, \dot {\vec x}_0, \dots, \vec x_{N-1}, \dot {\vec x}_{N-1}].
\end{equation}
Applying Newton's laws to the masses (see details in
Appendix~\ref{sec:discrete}), one can obtain the dynamics equation
\begin{equation}
  \label{eq:discrete-dyn}
  \dot{\vec y} = \vec f(\vec y).
\end{equation}

From Proposition~\ref{prop:A_to_C}, the configurations of the rotating
chain can be represented by a pair $(a,\bar L)$, which is associated
with the position of the free end $\vec x_0$. Next, we discretize
$(0,a_{\max})\times(0,\bar L_{\max})$ into a 2D grid. For each
$(a,\bar L)$ in the grid, we integrate, from the free end $\vec x_0$,
the shape function of the discretized chain~(\ref{eq:14}) at
rotational equilibrium -- in the same spirit as in
Proposition~\ref{prop:A_to_C}. This discretized shape function
corresponds to a state vector
$\vec y^{\rm{eq}}:= [\vec x^{\rm{eq}}_0, \bfzero, \dots, \vec
x^{\rm{eq}}_{N-1}, \bfzero]$.
Finally, we assess the stability of $\vec y^{\rm{eq}}$ by looking at
the Jacobian
\[
\vec J(\vec y^{\rm{eq}}):=\frac{\rm d\vec f}{\rm d \vec y}(\vec y^{\rm{eq}}).
\]
Specifically, if the largest real part
$\lambda_{\max}:=\max_{i} \rm{Re}(\lambda_{i})$ of the eigenvalues of
$\vec J(\vec y^{\rm{eq}})$ is positive, then the system is unstable at
$\vec y^{\rm{eq}}$; if it is negative, then the system is
asymptotically stable at $\vec y^{\rm{eq}}$~\cite[Theorem
3.1]{khalil1996noninear}.

Fig.~\ref{fig:stability_map}(A) depicts the values of $\lambda_{\max}$
for $(a,\bar L)\in(0,5)\times(0,40)$. One can observe an interesting
distribution of these values; in particular, the sharp transitions
around the zero-radius loci (black lines). However, even though
$\lambda_{\max}$ gets very close to zero on the left side of the
zero-radius loci or in the low-amplitude regime, it is never negative,
hinting that the system is at best marginally stable. While this could
be expected from our model, which does not include any energy
dissipation, it is contrary to the experimental observation of stable
rotation states.

We need therefore to take into account aerodynamic forces in the chain
dynamics, see details in Appendix~\ref{sec:discrete}. Note that
aerodynamic forces do not significantly affect the analysis of the
previous sections, as their effect on the shape of the chain is
negligible: for example, for a chain of length $\SI{0.76}{m}$ and
parameters $(a, \bar L) = (2.0, 10.0)$, the changes in the equilibrium
positions are less than $\SI{1}{mm}$, which is 0.14\% of the chain
length.

Fig.~\ref{fig:stability_map}(B) depicts the values of $\lambda_{\max}$
for the system with aerodynamic forces. One can note that the overall
distribution of $\lambda_{\max}$ is very similar to that of the system
\emph{without} aerodynamic forces [Fig.~\ref{fig:stability_map}(A)],
but with the key difference that the regions in
Fig.~\ref{fig:stability_map}(A) with low but positive values now
contain in Fig.~\ref{fig:stability_map}(B) \emph{negative} values of
$\lambda_{\max}$, which corresponds to asymptotically stable states.

\begin{figure}[t]
  \centering
  \begin{tikzpicture}
    \node[anchor=south west,inner sep=0] (image) at (0,0)
    {\includegraphics[]{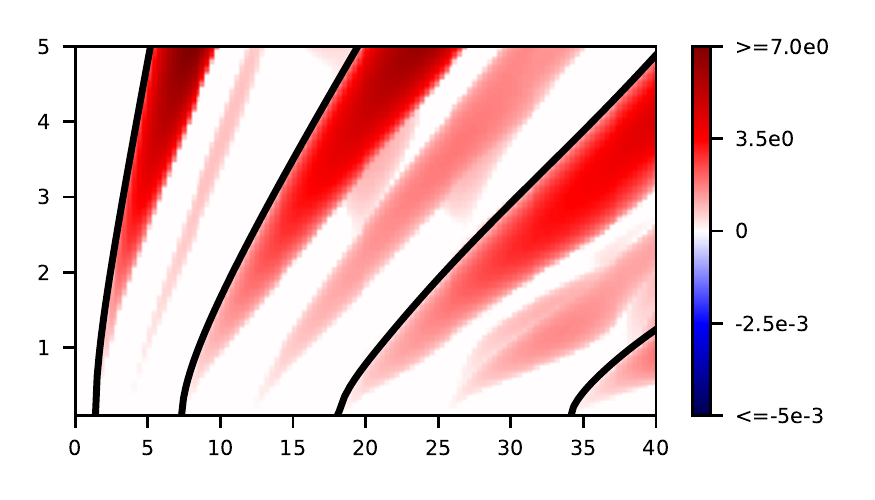}};
    \node[anchor=south west,inner sep=0] (imageright) at (0,-5)
    {\includegraphics[]{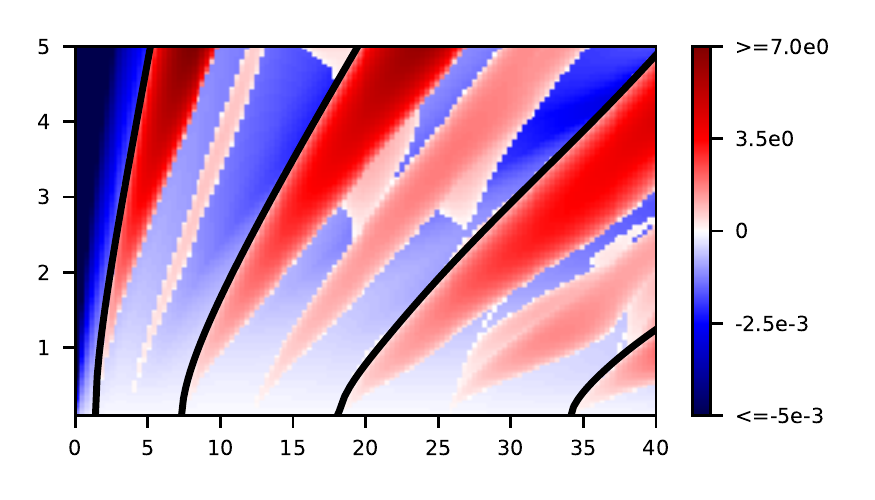}};
    \begin{scope}[shift={(imageright.south west)},
                  x={(imageright.south east)},y={(imageright.north west)}]
      \node at (0.12, 0.95) {(\textbf{B})};
      \node at (0.45, 0.02) {$\bar L$};
      \node at (0.0, 0.535) {$a$};

      \filldraw[fill=orange, thick] (0.3, 0.61)
      circle [x radius=0.018, y radius=0.03]
      node[anchor=south, very thick,
      color=green,
      xshift=-0.5cm] (start) {};
 
      \filldraw[fill=orange, thick] (0.42, 0.33)
      circle [x radius=0.018, y radius=0.03]
      node[anchor=south, very thick,
      color=orange,
      xshift=-0.5cm] (start) {};
      
      \draw[line width=2, ->, green, rounded corners] (0.29, 0.58)
      .. controls (0.23, 0.4) ..
      (0.19, 0.185) -- (0.365, 0.185) -- (0.4, 0.29);

      \draw[line width=2, dashed, ->, black] (0.32, 0.59)
      .. controls (0.45, 0.5) ..
      (0.425, 0.38)  ;
    \end{scope}
    \begin{scope}[x={(image.south east)},y={(image.north west)}]
      \node at (0.12, 0.95) {(\textbf{A})};
      \node at (0.45, 0.02) {$\bar L$};
      \node at (0.0, 0.535) {$a$};
    \end{scope}
  \end{tikzpicture}
  \caption{\label{fig:stability_map} (Best viewed in color) Maps of
    $\lambda_{\max}$, the largest real part of the eigenvalues of the
    linearized dynamics of two 10-link lumped-mass models at
    equilibrium: (\textbf{A}) model without aerodynamic forces and
    (\textbf{B}) model with aerodynamic forces. Positive values (red
    color) indicate unstable behaviors while negative values (blue
    color) indicate asymptotically stable behaviors. Most
    configurations that are stable in the presence of aerodynamic
    forces can not be concluded to be stable when there is no
    aerodynamic forces.  Black lines: zero-radius loci --
    configurations whose attachment radii are zeros. Green arrow: A
    path in the chain's configuration space that contains only stable
    configurations. Black dashed arrow: A path that contains unstable
    configurations. }
\end{figure}

One can make three more specific observations:
\begin{enumerate}
\item Configurations that are immediately on the right-hand sides of
  the zero-radius loci and with $a$ relatively large are unstable (red
  color);
\item Configurations that are immediately on the left-hand sides of
  the zero-radius loci and with $a$ relatively large are
  stable (blue color);
\item Configurations with $a$ small (low-amplitude regime) are stable
  (light blue color).
\end{enumerate}

Observation (1) hints that the upper portions of the zero-radius loci
form ``unstable barriers'' in the configuration space. Therefore, it
is \emph{not} possible to stably transit between rotation modes $i-1$
and $i$ (which requires crossing the $i$-th zero-radius locus, see
Proposition~\ref{prop:zero}) while staying in the upper portion of the
configuration space [dashed black arrow in
Fig.~\ref{fig:stability_map}(B)]. Observation (2) implies that
transitions between configurations of the same mode can be
stable. Observation (3) hints that a possible transition strategy
might consist in (i) going down to the low-amplitude regime; (ii)
traversing the $i$-th zero-radius locus while remaining in the
low-amplitude regime; (iii) going up towards the desired end
configuration [green arrow in Fig.~\ref{fig:stability_map}(B)]. This
strategy thus traverses only regions with negative $\lambda_{\max}$
and can be expected to be stable. The next section experimentally
assesses this strategy.

\section{Manipulation of the rotating chain}
\label{sec:manipulation}

\subsection{Experiment}

We now experimentally test the manipulation strategy enunciated in the
previous section. More precisely, to stably transit between two
different rotation modes $i$ and $j$, we propose to [see the green
arrow in Fig.~\ref{fig:stability_map}(B)]

\begin{enumerate}
\item Move from the rotation of mode $i$ towards $(\bar L_{i+1},0,0)$
  while staying in the blue region of Fig.~\ref{fig:stability_map}(B);
\item Move along the $\bar L$-axis towards $(\bar L_{j+1},0,0)$;
\item Move from $(\bar L_{j+1},0,0)$ towards the rotation of mode $j$
  while staying in the blue region of blue region of
  Fig.~\ref{fig:stability_map}(B).
\end{enumerate}

In practice, the histories of the control inputs ($r$ and $\omega$) to
achieve the transitions in steps 1 and 3 can be found by simple linear
interpolation, see e.g.~Fig.~\ref{fig:exp_control}(A).

\begin{figure}[htp]
  \centering
  \textbf{A}\\
  \includegraphics[width=0.5\textwidth]{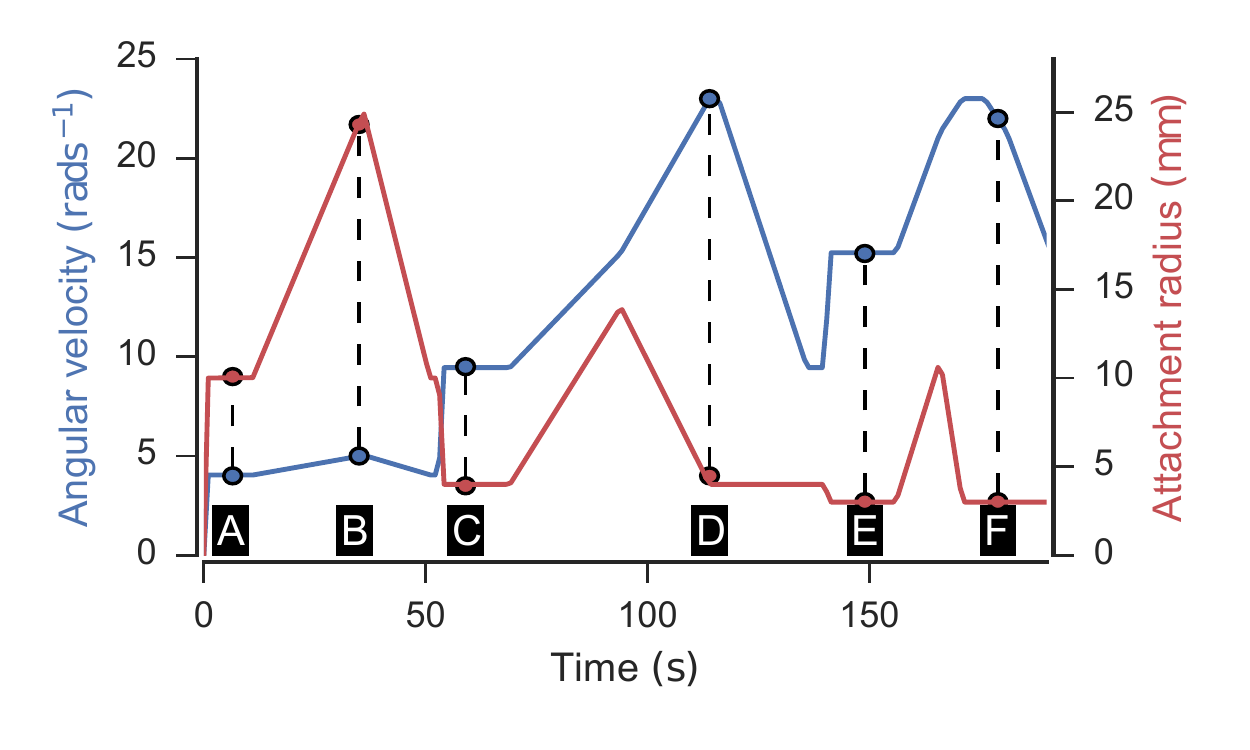}\\
  \textbf{B}
  \includegraphics[width=0.5\textwidth]{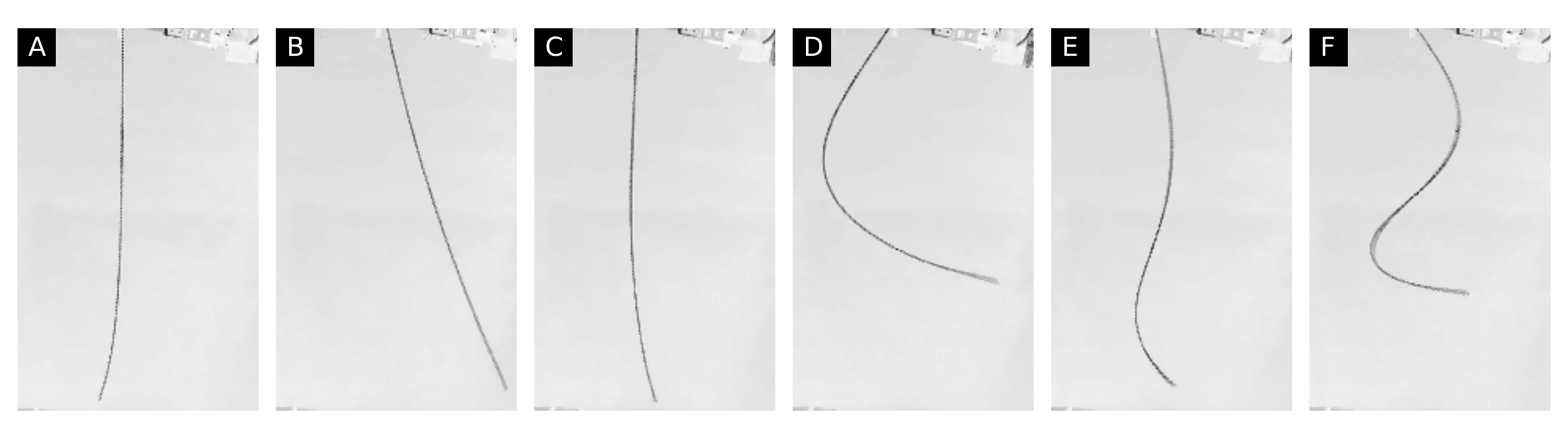}
  \caption{\textbf{A}: Histories of the control inputs. Red:
    attachment radius~$r$; blue: angular speed $\omega$. A:
    low-amplitude rotation at critical speed $\omega_1$. A $\to$ B:
    moving deep into rotation mode 0. B: stable rotation at mode 0. B
    $\to$ C: moving back to the low-amplitude regime with critical
    speed $\omega_1$ and subsequently increasing the speed to
    $\omega_2$ while staying in the low-amplitude regime. C:
    low-amplitude rotation at critical speed $\omega_2$.  C $\to$ D:
    moving deep into rotation mode~1.  D: stable rotation at mode
    1. E: low-amplitude rotation at critical speed $\omega_3$. F:
    stable rotation at mode~2. Note that the attachment radius was not
    exactly zero in the low-amplitude regimes, but set to some small
    values. This was necessary to physically generate the desired
    rotation speeds. \textbf{B}: Snapshots of the chain at
    different time instants. The labels A--F refer to the same time
    instants as in the control inputs plot. A video of the experiment
    (including more types of transitions) is available at
    \url{https://youtu.be/EnJdn3XdxEE}.}
  \label{fig:exp_control}
\end{figure}

We perform the following transitions 
\[
\mathrm{Rest} \to \mathrm {Mode\ 0} \to \mathrm {Mode\ 1} \to \mathrm      {Mode\ 2}
\]
on a metallic chain of length $\SI{0.76}{\meter}$ (note that the
weight of the chain is not involved in the calculations). The upper
end of the chain was attached to the end-effector of a 6-DOF
industrial manipulator (Denso VS-060). The critical speeds, calculated
using equation~(\ref{eq:critical_velocities}), are given in
Table~\ref{tab:critical}.

\begin{table}[htp]
\centering
\caption{Critical speeds for a chain of length $\SI{0.76}{\meter}$}  
\begin{tabular}{cccc}
    \toprule
     $i$                                 & $1$ & $2$ & $3$ \\
    \midrule
    $\omega_i$ (\SI{}{\radian\per\second}) & $4.34$ & $9.97$ & $15.64$ \\
    \bottomrule
\end{tabular}
\label{tab:critical}
\end{table}

A video of the experiment (including more types of transitions) is
available at
\url{https://youtu.be/EnJdn3XdxEE}. Fig.~\ref{fig:exp_control}(A)
shows the attachment radius and the angular speed as functions of
time. Fig.~\ref{fig:exp_control}(B) shows snapshots of the chain
at different rotation modes. As can be observed in the video, the
chain could transit between different rotation modes in a stable and
controlled manner. 

As the final note, we observed that any manipulation sequence that
traverses highly unstable regions (red regions in
Fig.~\ref{fig:stability_map}) definitely leads to unsustainable
rotations, as illustrated by the last section of the video.

\subsection{Implications for aerial manipulation}

For a circularly towing system, the ability to transit between
rotation modes is desirable. Indeed, different modes have different
functions. For instance, mode $0$ rotations are most suitable to
initiate a rotation sequence from a straight flying trajectory.  On
the other hand, rotations at higher order modes such as $1$ and $2$
have more compact shapes, smaller tip radii and higher tip velocities,
and are therefore more suitable to perform the actual deliveries or
explorations.

It is furthermore desirable to switch modes in a quasi-static manner,
as studied in this paper. Indeed, the \emph{transient} dynamics of a
heavily underactuated system such as the chain can be difficult to
handle. The infinite dimensionality of the system, unavoidable
modeling errors and aerodynamic effects make it challenging to design
and reliably execute non-quasi-static mode switching trajectories.

Our result suggests however that it is \emph{not} possible to realize
quasi-static mode transitions with fixed-wing aircraft. Indeed, since
the turning radii of such aircraft are lower-bounded, the resulting
rotations cannot enter the low-amplitude regime, which is necessary
for quasi-static mode transition, as shown in the above
development. Therefore, although non-quasi-static mode transitions are
more challenging to plan and execute, they must be studied in future
works.

\section{Conclusion}
\label{sec:conclusion}

The study of the rotating chain has a long and rich history. Starting
from the 1950's, a number of researchers have described its behavior,
and identified the existence of rotation modes. In this paper, we have
investigated for the first time the \emph{manipulation} problem, \ie
how to stably transit between different rotation modes. For that, we
developed a framework for understanding the kinematics of the rotating
chain with non-zero attachment radii and its configuration
space. Based on this understanding, we proposed a manipulation
strategy for transiting between different rotation modes in a stable
and controlled manner. In turn, on the practical side, this result has
some implications for aerial manipulation.

It can be shown (see Appendix~\ref{sec:tip-mass}) that all the
previous developments can be extended to the case of the chain with
non-negligible tip mass. The key enabling notion here is that
of~\emph{differential flatness}~\cite{murray1996trajectory}, with the
flat output being the state of the free end. By differential flatness,
given any trajectory of the free end, one can reversely compute the
state trajectory and the control trajectory of the whole system.  In
fact, the property that we have ``manually'' discovered in this paper
-- the configuration space of a rotating chain is parameterized by the
parameter space $\calA$ -- is related to the differential flatness of
the rotating chain system.  Indeed, each point $(a, \bar L)$
corresponds to a circular motion of the free end, which in turn, by
differential flatness, corresponds to the state and control trajectory
of the whole chain, which in turn defines the configuration. This
observation suggests two possible extensions:
\begin{itemize}
\item the motion of the free end can be more general (\eg an ellipse),
  and can thereby lead to more practical applications, such as
  swinging to hit some position with the tip mass;
\item other differentially-flat systems, whose flat output can be
  parameterized.
\end{itemize}

Another idea developed here, namely the visualization of the
configuration space based on forward integration of the shape
function, might find fruitful applications in the study of other
flexible objects with ``mode transition'', such as elastic rods or
concentric tubes subject to ``snapping''. Our future work will explore
these possible extensions.

\appendix

\subsection{Chain with non-negligible tip mass}
\label{sec:tip-mass}

Suppose that the free end of the chain carries a drogue of mass
$M$. We show that all the previous development can be applied to this
more general problem.

We first proceed similarly to Section~\ref{sec:background} and derive
the dynamics equation of the rotating chain with tip mass.  Writing the
force equilibrium equation at the tip mass yields
\begin{align}
  \label{eq:17}
  F(0) z'(0) &= Mg,\\
  \label{eq:19}
  F(0) \rho'(0) &= - M \rho(0) \omega^{2}.
\end{align}
Next, integrate Eq.~\eqref{eq:17} to obtain
\begin{equation}
  \label{eq:5}
  F(s) z(s)' = g(\mu  s + M),
\end{equation}
where $\mu$ is again the linear density of the chain. This equation
leads to
\begin{equation}
  \label{eq:6}
  F(s) = g\frac{\mu s + M}{\sqrt{1 - \rho'^2}}.
\end{equation}
One arrives at the governing equation
\begin{equation}
  \label{eq:7}
  \frac{\d {}}{\d s} \left(\rho' \frac{\mu  s + M}{\sqrt{1-\rho'^2}}
  \right) 
  + \rho \frac{\mu\omega^2}{g}
  = 0, 
\end{equation}
with boundary condition $\rho(L)=r$ where $r$ is the attachment
radius. One can now convert Eq.~\eqref{eq:7} to a dimensionless
equation 
\begin{align}
  \label{eq:9}
  \frac{\d{}^{2}u}{\d{\bar s}^{2}} + \frac{u}{\sqrt{(\bar s +
  M\omega^2 / \mu g)^2 + u^2}} = 0
\end{align}
by the following changes of variable
\begin{equation}
  \label{eq:8}
  \begin{aligned}
  u &:= \rho'\frac{\mu s + M}{\sqrt{1-{\rho'}^2}}\frac{\omega^2 }{\mu g}, \\ \quad \bar
  s &:= \frac{s\omega^2}{g}.
  \end{aligned}
\end{equation}
The boundary conditions are
\begin{align}
  \label{eq:10}
  u'(0) &= a,\\
  u(0) &= a \frac{M\omega^2}{\mu g},\\
  u'(\bar L) &= \bar r,
\end{align}
where $a= - \rho(0) \omega ^2 / g$ and $\bar r = - r
\omega^2/g$.


Eq.~\eqref{eq:9} is a BVP that can be solved using the shooting method
as described in Section~\ref{sec:shooting}. Moreover, we see that
$(a, \bar L)$ also parameterizes the solution space, which is the
configuration space of the rotating chain with tip mass.

\subsection{Low-amplitude regime}
\label{sec:lowamp}

Here we recall the results obtained by Kolodner~\cite{Kolodner1955}
for the low-amplitude regime. Low-amplitude rotations are defined by a
zero attachment radius $r=0$ and infinitely small values for the shape
function $\rho$.  Linearizing equation~(\ref{eq:x1}) about $\rho=0$
yields

\begin{equation}
  \label{eq:4}
  \rho w^2/g + \rho' + s \rho'' = 0,
\end{equation}
with the boundary condition $\rho(L) = 0$.

By a change of variable $v:=2\sqrt{s\omega^2/g}$, one can
rewrite the above equation as
\[
\rho v + \rho_v + \rho_{vv} v =0,
\]
which has solutions of the form
\[
  \rho(v) = c J_0(v),\quad \mathrm{\ie}
\]
\[
  \rho(s) = cJ_0(2\omega\sqrt{s/g}),
\]
where $J_0$ is the zeroth-Bessel function. The boundary condition
$\rho(L) = 0$ then implies that the angular speed can only take
discrete values $(\omega_i)_{i\in\mathbb{N}}$ where
\begin{equation}
  \label{eq:critical_velocities}
  \omega_i = \frac{h_i}{2} \sqrt{g/L}
\end{equation}
where $h_i$ is the $i$-th zero of the Bessel function $J_0$.

\subsection{Useful results from the theory of Ordinary Differential
  Equations}
\label{sec:ODE}

\begin{lemma}[Lipschitz]
  The ordinary differential equation~(\ref{eq:X}) satisfies Lipschitz
  condition in some convex bounded domain $\calD$ that contains
  $\calS$.
\end{lemma}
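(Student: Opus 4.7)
The plan is to exhibit $\calD$ explicitly and then obtain a Lipschitz constant by bounding the Jacobian of $\bfX$ with respect to $\vec u=(u,u')$ uniformly on $\calD$. First, I would unpack the vector field from Eq.~\eqref{eq:new}, which gives
\[
\bfX(\vec u,\bar s) = \left(u',\ -\frac{u}{\sqrt{\bar s^{2}+u^{2}}}\right),
\]
and compute
\[
\frac{\partial \bfX}{\partial \vec u}(\vec u,\bar s)
= \begin{pmatrix} 0 & 1 \\ -\dfrac{\bar s^{2}}{(\bar s^{2}+u^{2})^{3/2}} & 0 \end{pmatrix}.
\]

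Second, I would take $\calD$ to be a convex bounded box of the form $[\bar s_{0},\bar L_{\max}]\times[-U,U]\times[-V,V]$ in $(\bar s,u,u')$-space, with some small $\bar s_{0}>0$. The two technical assumptions of Section~\ref{sec:configuration_space} yield $a\le a_{\max}$ and $\bar L\le \bar L_{\max}$; since $|\bfX_{2}|\le 1$ globally, two successive integrations produce explicit constants $U,V$ such that every trajectory launched from $(0,a)$ with $a\in(0,a_{\max})$ stays in $[-U,U]\times[-V,V]$ for $\bar s\in[0,\bar L_{\max}]$. Thus the portion of $\calS$ with $\bar s\ge\bar s_{0}$ is contained in $\calD$.

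Third, on $\calD$ the Jacobian entries satisfy $\bigl|\bar s^{2}/(\bar s^{2}+u^{2})^{3/2}\bigr|\le 1/\bar s\le 1/\bar s_{0}$, so the operator norm of the Jacobian is uniformly bounded by some $M=M(\bar s_{0})$. Convexity of $\calD$ and the mean value theorem applied along the straight segment joining $\vec u_{1}$ and $\vec u_{2}$ then yield
\[
\|\bfX(\vec u_{1},\bar s)-\bfX(\vec u_{2},\bar s)\|\le M\,\|\vec u_{1}-\vec u_{2}\|,
\]
uniformly for $\bar s\in[\bar s_{0},\bar L_{\max}]$, which is the Lipschitz condition.

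The main obstacle is the singularity of $\bfX$ at $(\bar s,u)=(0,0)$: the $(2,1)$ entry of the Jacobian blows up like $1/\bar s$ along $u=0$. Although $\calS$ itself has $\bar s>0$ throughout (since $\calA=(0,a_{\max})\times(0,\bar L_{\max})$ is open at the lower end), it approaches $\bar s=0$ arbitrarily closely, so no single box of the above shape can strictly enclose all of $\calS$ with a finite Lipschitz constant. The honest reading of the lemma is therefore that for every compact sub-surface of $\calS$ one can exhibit such a $\calD$, which is precisely what the continuous-dependence arguments used in Propositions~\ref{prop:A_to_C} and~\ref{prop:A_to_S} require. A fully global statement would require replacing the pointwise Lipschitz condition by an Osgood-type uniqueness estimate that exploits the \emph{a priori} bound $|\bfX_{2}|\le 1$.
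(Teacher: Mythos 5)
Your strategy is the same as the paper's --- bound $\calS$ using $|u''|\le 1$, enclose it in a convex bounded box, bound the Jacobian of $\bfX$ there, and conclude via convexity and the mean value theorem --- but you carry out the Jacobian computation that the paper skips, and in doing so you expose a real defect in the paper's argument. The paper asserts that all partial derivatives $\p\bfX_i/\p x_j$ are continuous on $\calD$ ``with continuation at $\bar s=0$,'' invoking the L'H\^opital remark of Section~\ref{sec:dimless}; but that remark only shows that the field $\bfX_2=-u/\sqrt{\bar s^2+u^2}$ has a finite limit \emph{along a solution trajectory} $u\approx a\bar s$. As you compute, $\p\bfX_2/\p u=-\bar s^2/(\bar s^2+u^2)^{3/2}$ equals $-1/\bar s$ along $u=0$ and is therefore unbounded on any neighborhood of $(\bar s,u)=(0,0)$; since the curves constituting $\calS$ accumulate at $(0,0,a)$, no convex bounded domain containing all of $\calS$ admits a finite uniform Lipschitz constant. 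Your conclusion --- that the lemma holds verbatim only on sub-domains with $\bar s\ge\bar s_0>0$, with constant $M=1/\bar s_0$ --- is the correct statement, and the paper's proof as written is not.

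Two remarks on the consequences. First, your truncated version suffices for most of the downstream uses: the injectivity and continuity arguments in Propositions~\ref{prop:A_to_C} and~\ref{prop:A_to_S} compare trajectories at $\bar s=\bar L>0$ and can be run on $[\bar s_0,\bar L_{\max}]$, provided one separately controls the interval $[0,\bar s_0]$. Second, that separate control is available by a more elementary route than a general Osgood estimate: since $u_1(0)=u_2(0)=0$, one has $|u_1(\bar s)-u_2(\bar s)|\le \bar s\,\sup_{[0,\bar s]}|u_1'-u_2'|$, and this factor of $\bar s$ exactly cancels the $1/\bar s$ blow-up of the Lipschitz constant, so a Gronwall-type bound on $\sup|u_1'-u_2'|$ still closes and continuous dependence on $a$ survives down to $\bar s=0$. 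Making that cancellation explicit would be the cleanest way to repair the lemma while keeping its intended scope.
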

\begin{proof}
  Note first that $|u''(u, \bar s)| < 1$ for all $u, \bar s \in \bbR$,
  which implies that $\calS$ is bounded.  Set now
  \[
  \calD:= (0, \bar L_{\max}) \times (u_{\inf}, u_{\sup}) \times (u'_{\inf}, u'_{\sup}),
  \]
  where $u_{\inf}$, $u_{\sup}$, $u'_{\inf}$, $u'_{\sup}$ are bounds on
  $\calS$.  Clearly, $\calD$ is bounded, convex and contains $\calS$.
  Next, all partial derivatives $\frac{\p \bfX_i}{\p x_j}$ are
  continuous in $\calD$ (with continuation at $\bar s=0$, see Remark
  in Section~\ref{sec:dimless}). This implies that $\bfX$ is Lipschitz
  in $\calD$~\cite{HL2005elec}.
\end{proof}

We now recall two standard theorems in the theory of Ordinary
Differential Equations, see \eg \cite{HL2005elec}.

\begin{theorem}[Uniqueness]
  If the vector field $\bfX(\vec u, t)$ satisfies Lipschitz condition in
  a domain $\calD$, then there is at most one solution $\vec u(t)$ of
  the differential equation
  \[
    \frac{\d{\vec u}}{\d{t}} = \bfX(\vec u, t)
  \]
  that satisfies a given initial condition $\vec u(a) = \bfc \in \calD$.
\end{theorem}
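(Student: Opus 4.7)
The plan is to prove this by reducing to a standard application of Grönwall's inequality. Assume for contradiction that there exist two solutions $\vec u_1(t)$ and $\vec u_2(t)$ of the ODE, both satisfying $\vec u_1(a) = \vec u_2(a) = \bfc$, defined on some interval $I\ni a$ and taking values in $\calD$. The goal is to show $\vec u_1 \equiv \vec u_2$ on $I$.

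First, I would rewrite the ODE in integral form. Since each $\vec u_i$ is continuously differentiable, integrating from $a$ to $t$ gives
\[
\vec u_i(t) = \bfc + \int_a^t \bfX(\vec u_i(\tau), \tau)\,\d \tau,\quad i=1,2.
\]
Subtracting the two identities and taking norms yields
\[
\|\vec u_1(t)-\vec u_2(t)\| \le \int_a^t \|\bfX(\vec u_1(\tau),\tau)-\bfX(\vec u_2(\tau),\tau)\|\,\d \tau.
\]
Applying the Lipschitz hypothesis on $\calD$ with constant $M$, I obtain, for $t\ge a$,
\[
\phi(t) := \|\vec u_1(t)-\vec u_2(t)\| \le M\int_a^t \phi(\tau)\,\d \tau.
\]

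Now I would invoke Grönwall's inequality: since $\phi$ is continuous, nonnegative, satisfies the above integral inequality, and $\phi(a) = 0$, it follows that $\phi(t) \le 0\cdot e^{M(t-a)} = 0$ for all $t\ge a$ in $I$. Hence $\phi(t)\equiv 0$, which gives $\vec u_1(t)=\vec u_2(t)$ on the right of $a$. The same argument applied in reverse time (replace $t$ by $2a-t$ and use the same Lipschitz bound) handles $t\le a$, yielding uniqueness on all of $I$.

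The argument is essentially textbook, so the main ``obstacle'' is only bookkeeping: making sure both solutions genuinely remain inside the convex domain $\calD$ where the Lipschitz estimate holds (which is immediate here since by hypothesis each solution takes values in $\calD$), and properly stating Grönwall in the form needed. No deeper difficulty arises, and the conclusion is exactly the uniqueness statement invoked earlier in the proofs of Propositions~\ref{prop:A_to_C} and~\ref{prop:A_to_S}.
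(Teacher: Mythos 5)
Your proof is correct, but note that the paper does not actually prove this statement: it is explicitly ``recalled'' as a standard theorem of ODE theory with a citation to a textbook, so there is no in-paper argument to compare against. Your Gr\"onwall-based argument is exactly the canonical textbook proof that the cited reference would contain: pass to the integral equation, subtract, apply the Lipschitz bound on $\calD$ to get $\phi(t)\le M\int_a^t\phi(\tau)\,{\rm d}\tau$, and conclude $\phi\equiv 0$ from Gr\"onwall with zero initial data; the time-reversal step for $t\le a$ and the observation that both solutions are assumed to take values in $\calD$ (so the Lipschitz estimate genuinely applies along both trajectories) are the only points requiring care, and you address both. If anything, your write-up is more self-contained than the paper's treatment, and it is consistent with the constant $M$ and the exponential estimate used in the companion Continuity theorem and in the proof of Proposition~\ref{prop:A_to_S}.
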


\begin{theorem}[Continuity]
  Let $\vec u_1(t)$ and $\vec u_2(t)$ be any two solutions of the
  differential equation $\bfX(\vec u, t)$ in $T_1 \leq t \leq T_2$,
  where $\bfX(\vec u, t)$ is continuous and Lipschitz in some domain
  $\calD$ that contains the region where $\vec u_1(t)$ and $\vec u_2(t)$
  are defined. Then, there exists a constant $M$ such that
  \[
  \|\vec u_1(t) - \vec u_2(t)\| \leq e^{M|t-a|} \|\vec u(a) - \bfy(a)\|
  \]
  for all $a, t \in [T_1, T_2]$.
  
\end{theorem}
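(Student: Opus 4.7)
The plan is to convert the ODE statement into integral form, bound the integrand using the Lipschitz hypothesis on $\bfX$, and close the argument with Grönwall's inequality. First, since both $\vec u_1$ and $\vec u_2$ solve $\d{\vec u}/\d{t} = \bfX(\vec u, t)$ on $[T_1, T_2]$ and their graphs lie in $\calD$, I would integrate from $a$ to $t$ to obtain
\[
\vec u_i(t) = \vec u_i(a) + \int_a^t \bfX(\vec u_i(\tau), \tau)\,\d{\tau}, \quad i \in \{1, 2\}.
\]
Subtracting the two integral equations and using the triangle inequality for integrals yields
\[
\|\vec u_1(t) - \vec u_2(t)\| \leq \|\vec u_1(a) - \vec u_2(a)\| + \left| \int_a^t \|\bfX(\vec u_1(\tau), \tau) - \bfX(\vec u_2(\tau), \tau)\|\,\d{\tau} \right|.
\]

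Next, I would invoke the Lipschitz hypothesis on $\calD$: there exists a constant $M > 0$ such that $\|\bfX(\vec v, \tau) - \bfX(\vec w, \tau)\| \leq M \|\vec v - \vec w\|$ for all $\vec v, \vec w \in \calD$ and $\tau \in [T_1, T_2]$. Applying this to the integrand, and setting $\varphi(t) := \|\vec u_1(t) - \vec u_2(t)\|$ with $C := \|\vec u_1(a) - \vec u_2(a)\|$, produces the scalar integral inequality
\[
\varphi(t) \leq C + M \left| \int_a^t \varphi(\tau)\,\d{\tau} \right|.
\]

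The conclusion then follows from Grönwall's inequality applied to the non-negative continuous function $\varphi$, yielding $\varphi(t) \leq C\, e^{M|t-a|}$, which is exactly the claimed bound. The main subtlety — and the step I would treat as the chief obstacle — is handling both orderings $t \geq a$ and $t < a$ uniformly, since Grönwall's lemma is classically stated for an increasing argument. For $t \geq a$ the standard form applies directly. For $t < a$ I would apply the lemma on the reversed interval $[t, a]$ (equivalently, perform the time reversal $\tau \mapsto 2a - \tau$, which preserves the Lipschitz constant $M$), producing the exponent $M(a - t)$. Combining the two cases collapses the exponent to $M|t - a|$, completing the argument. No machinery beyond the integral form of the ODE and Grönwall's inequality is required.
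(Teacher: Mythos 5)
Your Gr\"onwall-based argument is correct and is the standard proof of this continuous-dependence estimate: passing to the integral form, applying the Lipschitz bound on $\calD$, and invoking Gr\"onwall's inequality (with the time reversal to cover $t<a$) yields exactly $\|\vec u_1(t)-\vec u_2(t)\|\leq e^{M|t-a|}\|\vec u_1(a)-\vec u_2(a)\|$. The paper itself gives no proof --- it recalls this theorem as a standard result with a citation --- so there is nothing to diverge from; your write-up simply supplies the canonical argument (and implicitly corrects the typo $\|\vec u(a)-\bfy(a)\|$ in the stated bound, which should read $\|\vec u_1(a)-\vec u_2(a)\|$).
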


\subsection{The discretized chain model}
\label{sec:discrete}
Here we describe the procedure to obtain Eq.~\eqref{eq:discrete-dyn},
which is the dynamics equation of the discretized chain model employed
in Section~\ref{sec:stability_analysis}, see also
Fig.~\ref{fig:discrete-chain}. 

The net force $\vec F_{i}$ acting on the $i$-th mass is the sum of the
following three components:
\begin{enumerate}
\item \emph{fictitious forces}, which include the Coriolis force and
  centrifugal force associated with the rotating frame;
\item \emph{constraint forces} generated by the $i$-th and $i+1$-th
  links;
\item \emph{aerodynamic forces}, which include drag and lift.
\end{enumerate}
Fictitious forces are computed using standard formulas, which can be
found in any textbook on classical mechanics. To compute the
constraint forces, we model the links as stiff linear springs whose
stiffness approximates that of the chain used in the experiment of
Section~\ref{sec:manipulation}, which was
$\simeq\SI{8e7}{N/m}$. Constraint forces are then computed using
Hooke's law.

Next, to compute aerodynamic forces, we follow the modelling choices
of~\cite{Williams2007}, i.e. the aerodynamic forces acting on the
$i$-th link is placed entirely on the $i$-th mass. Specifically,
define the link length vector as
$\vec l_{i}:= \vec x_{i} - \vec x_{i-1}$ and denote by $\vec v_i$ the
actual air speed of the $i$-th mass, the angle of attack of the $i$-th
link is given by
\begin{equation*}
  \cos \xi_i = - \frac{\vec l_i \cdot \vec v_{i} } {\|\vec l_i\| \|\vec v_i\| }.
\end{equation*}
The drag and lift acting on the $i$-th link are then given by
\begin{equation*}
  \begin{aligned}
    \vec F_{i}^{D} &= 0.5 \rho_{a} C_{D} \|\vec l_i\| d \|\vec v_{i}\|^2\vec e_{D},\\
    \vec F_{i}^{L} &= 0.5 \rho_{a} C_{L} \|\vec l_i\| d \|\vec
    v_{i}\|^2\vec e_{L},
  \end{aligned}
\end{equation*}
where the directions and coefficents of drag and lift are
\begin{equation*}
  \begin{aligned}
    \vec e_{D} &= - \frac{\vec v_{i}}{\|\vec v_{i}\|}, &\quad
    \vec e_{L} &= - \frac{(\vec v_i\times \vec l_{i})\times \vec v_{i}}
    {\|(\vec v_i\times \vec l_{i})\times \vec v_{i}\|},\\
  C_{D} &= C_f + C_{n}\sin^{3} (\xi_i), &\quad C_{L} &= C_{n}\sin^2\xi_{i} \cos \xi_{i}.
  \end{aligned}
\end{equation*}
In the above equations, $d$ denotes the diameter of the chain, $C_{f}$
and $C_{n}$ are respectively the skin-fraction and crossflow drag
coefficients, $\rho_{a}$ is the air density. These parameters have the
following numerical values
\begin{equation*}
  \begin{aligned}
  d &= \SI{1}{mm},\; &\rho_a&= \SI{1.225}{kg/m^{3}}, \\
  C_f &= 0.038, \; &C_n &= 1.17. 
  \end{aligned}
\end{equation*}

Summing the components we obtain the $i$-th net force $\vec F_i$, from
which the acceleration of the $i$-th mass can be found as
\begin{equation*}
  \ddot {\vec x}_i = \vec F_i / m_i,
\end{equation*}
where $m_i$ is the mass of the $i$-th mass.  Rearranging the terms,
one obtains the dynamics equation \eqref{eq:discrete-dyn}
\begin{equation*}
  \dot{\vec y} = \vec f (\vec y).
\end{equation*}

\bibliographystyle{IEEEtran}
\bibliography{library}
\end{document}